\documentclass{article}

\PassOptionsToPackage{numbers,compress}{natbib}
\usepackage[main,preprint]{neurips_2026}

\usepackage[utf8]{inputenc}
\usepackage[T1]{fontenc}
\usepackage{microtype}
\usepackage{graphicx}
\usepackage{booktabs}
\usepackage{multirow}
\usepackage{tabularx}
\usepackage{float}
\usepackage{placeins}
\usepackage{needspace}
\usepackage{algorithm}
\usepackage{algpseudocode}
\usepackage{amsmath}
\usepackage{mathtools}
\usepackage{amssymb}
\usepackage{amsfonts}
\usepackage{amsthm}
\usepackage{bm}
\usepackage{nicefrac}
\usepackage{xcolor}
\usepackage{enumitem}
\usepackage{tikz}
\usetikzlibrary{arrows.meta,positioning}
\usepackage{url}
\usepackage{hyperref}
\hypersetup{
  colorlinks=true,
  linkcolor=blue!45!black,
  citecolor=blue!45!black,
  urlcolor=blue!45!black
}
\graphicspath{{workspace/figs/}}

\newcommand{\R}{\mathbb{R}}
\newcommand{\E}{\mathbb{E}}
\newcommand{\KL}{D_{\mathrm{KL}}}

\newcommand{\Tr}{\operatorname{Tr}}
\newcommand{\diag}{\operatorname{diag}}

\newcommand{\argmin}{\operatorname*{arg\,min}}

\newcommand{\ntxt}[1]{\text{\normalfont #1}}
\newcommand{\GL}{\mathrm{GL}}
\newcommand{\Orth}{\mathrm{O}}

\newcommand{\tp}{{}^{\!\top}}
\newcommand{\Price}{\rho}
\newcommand{\hPrice}{\hat{\rho}}
\newcommand{\Cost}{\kappa}

\newcommand{\eps}{\varepsilon}

\newcommand{\vw}{\mathbf{w}}
\newcommand{\vx}{\mathbf{x}}

\newcommand{\vz}{\mathbf{z}}

\newcommand{\vg}{\mathbf{g}}
\newcommand{\vh}{\mathbf{h}}
\newcommand{\vc}{\mathbf{c}}
\newcommand{\vdelta}{\boldsymbol{\delta}}

\newcommand{\rmA}{\mathbf{A}}

\newcommand{\rmC}{\mathbf{C}}
\newcommand{\rmD}{\mathbf{D}}
\newcommand{\rmE}{\mathbf{E}}
\newcommand{\rmH}{\mathbf{H}}
\newcommand{\rmI}{\mathbf{I}}

\newcommand{\rmM}{\mathbf{M}}
\newcommand{\rmN}{\mathbf{N}}
\newcommand{\rmP}{\mathbf{P}}
\newcommand{\rmQ}{\mathbf{Q}}
\newcommand{\rmR}{\mathbf{R}}
\newcommand{\rmS}{\mathbf{S}}

\newcommand{\rmU}{\mathbf{U}}
\newcommand{\rmV}{\mathbf{V}}
\newcommand{\rmW}{\mathbf{W}}
\newcommand{\rmX}{\mathbf{X}}

\newcommand{\Dmat}{\boldsymbol{\Delta}}
\newcommand{\Sigmamat}{\boldsymbol{\Sigma}}

\newcommand{\W}{\rmW}
\newcommand{\X}{\rmX}
\newcommand{\hW}{\hat{\rmW}}
\newcommand{\hHz}{\hat{\rmH}}
\newcommand{\Pmat}{\rmP}
\newcommand{\Qmat}{\rmQ}
\newcommand{\Smat}{\rmS}
\newcommand{\Rerr}{\rmR}

\newcommand{\Cerr}{\Sigmamat}
\newcommand{\hCerr}{\hat{\Sigmamat}}

\newcommand{\Jcal}{\mathcal{J}}
\newcommand{\Acal}{\mathcal{A}}
\newcommand{\Dcal}{\mathcal{D}}

\newcommand{\Lref}{\mathcal{L}}

\newcommand{\pizero}{\pi_0}
\newcommand{\Nout}{\rmN}

\newcommand{\Cx}{\rmC}
\newcommand{\Cxl}{\rmC_l}

\newcommand{\Hz}{\rmH}

\newcommand{\I}{\rmI}
\newcommand{\tDelta}{\tilde{\Dmat}}
\newcommand{\tW}{\tilde{\rmW}}
\newcommand{\tx}{\tilde{\vx}}
\newcommand{\cgeom}{\bar c_{\ntxt{geo}}}

\newcommand{\Levels}{\mathsf{L}}
\DeclarePairedDelimiter{\norm}{\lVert}{\rVert}
\DeclarePairedDelimiter{\abs}{\lvert}{\rvert}

\providecommand{\Acal}{\mathcal{A}}

\providecommand{\Jcal}{\mathcal{J}}

\providecommand{\eps}{\varepsilon}

\usepackage[capitalize,noabbrev]{cleveref}

\crefname{section}{Sec.}{Secs.}
\Crefname{section}{Sec.}{Secs.}
\crefname{subsection}{Sec.}{Secs.}
\Crefname{subsection}{Sec.}{Secs.}
\crefname{subsubsection}{Sec.}{Secs.}
\Crefname{subsubsection}{Sec.}{Secs.}
\crefname{theorem}{Thm.}{Thms.}
\Crefname{theorem}{Thm.}{Thms.}
\crefname{lemma}{Lem.}{Lems.}
\Crefname{lemma}{Lem.}{Lems.}
\crefname{proposition}{Prop.}{Props.}
\Crefname{proposition}{Prop.}{Props.}
\crefname{corollary}{Cor.}{Cors.}
\Crefname{corollary}{Cor.}{Cors.}
\crefname{definition}{Def.}{Defs.}
\Crefname{definition}{Def.}{Defs.}
\crefname{assumption}{Assump.}{Assumps.}
\Crefname{assumption}{Assump.}{Assumps.}
\crefname{remark}{Rem.}{Rems.}
\Crefname{remark}{Rem.}{Rems.}
\crefname{algorithm}{Alg.}{Algs.}
\Crefname{algorithm}{Alg.}{Algs.}
\crefname{appendix}{Appx.}{Appx.}
\Crefname{appendix}{Appx.}{Appx.}

\let\oldappendix\appendix
\renewcommand{\appendix}{%
	\oldappendix
	\crefalias{section}{appendix}%
	\crefalias{subsection}{appendix}%
	\crefalias{subsubsection}{appendix}%
}

\newtheoremstyle{paperplain}
  {3pt}{3pt}
  {\fontfamily{LibertinusSerif-TLF}\selectfont\itshape}
  {}
  {\fontfamily{LibertinusSerif-TLF}\selectfont\bfseries}
  {.}
  {5pt plus 1pt minus 1pt}
  {}

\usepackage{aliascnt}
\theoremstyle{paperplain}
\newtheorem{theorem}{Theorem}[section]
\newaliascnt{proposition}{theorem}

\aliascntresetthe{proposition}
\newaliascnt{lemma}{theorem}

\aliascntresetthe{lemma}
\newaliascnt{corollary}{theorem}

\aliascntresetthe{corollary}
\theoremstyle{definition}
\newaliascnt{definition}{theorem}

\aliascntresetthe{definition}
\newaliascnt{assumption}{theorem}

\aliascntresetthe{assumption}
\newaliascnt{remark}{theorem}
\newtheorem{remark}[remark]{Remark}
\aliascntresetthe{remark}

\title{Predicting Quantization Price for Selecting PTQ Configurations Before Deployment}

\author{
  Junbin Qiu \quad
  Jian Mu \quad
  Weitong Zhang \quad
  Yao Shu
}

\makeatletter
\renewcommand{\@noticestring}{%
  Preprint. Correspondence to Yao Shu \texttt{<yaoshu@hkust-gz.edu.cn>}.%
}
\makeatother

\begin{document}

\maketitle

\begin{abstract}
Weight-space post-training quantization (PTQ) must choose finite formats, granularities, quantizer families, transformations, and bits before the completed quantized model reveals its output-distribution drift.
Existing PTQ methods predict important pieces of this degradation, including reconstruction error, Hessian sensitivity, transformation effects, and downstream loss, but these pieces are usually scored after fixing the quantization geometry or inside separate configuration families.
We formulate weight-space PTQ as pre-deployment configuration selection using priced layer-output error.
Each admissible layer configuration is treated as an error generator with a deployment cost, which induces a layer-output error covariance $\Cerr_l(\alpha_l)$, and the full-precision model prices that covariance by downstream curvature, $\hPrice_l(\alpha_l)=\frac12\Tr(\hHz_l\hCerr_l(\alpha_l))$.
The price follows from full-precision-to-quantized forward KL, whose first-order term cancels at the reference model.
It turns reconstruction and diagonal scores into reduced proxies that drop price factors, while finite formats, codebooks, granularities, and equivalent transformations become comparable candidates through the covariances they induce and the costs they pay.
A trace reduction then yields a calibration-time price table and a budgeted price-guided selector, making fixed-geometry bit allocation a special case rather than the organizing problem.
\end{abstract}

\section{Introduction}

Weight-space PTQ is chosen before its final consequence is visible.
A deployment stack must decide, layer by layer, which finite format, granularity, quantizer family, pre-quantization transformation, and bitwidth to use, while the completed weight-quantized model reveals its output-distribution drift only after all of these choices have been composed.
Exhaustively building every candidate model is usually impossible, yet reducing the decision to mixed precision after fixing the geometry misses a central degree of freedom because the configuration itself determines what layer-output error will be injected.
The practical question is therefore not only how many bits to allocate, but how to rank admissible weight-space PTQ configurations before deployment trials.
The same calibration pass must score choices that have not yet been deployed.

Prior work already shows why this question cannot be answered by one local proxy alone.
Reconstruction and second-order PTQ methods such as OBQ, OBC, GPTQ, AdaRound, BRECQ, QuIP, and QuIP\# reduce or compensate local weight error \citep{nagel2020adaround,li2021brecq,frantar2022gptq,frantar2023obc,chee2023quip,tseng2024quipsharp}.
HAWQ-style methods, HIGGS, and Q-Palette use Hessian, sensitivity, linear-response, or rate-distortion signals for precision or level allocation \citep{dong2019hawq,dong2019hawqv2,malinovskii2024linearity,lee2025qpalette}.
SmoothQuant, AWQ, QuaRot, SpinQuant, OmniQuant, AffineQuant, and FlatQuant show that scaling, rotations, incoherence, outlier handling, and learned transforms change quantization behavior \citep{xiao2023smoothquant,lin2024awq,ashkboos2024quarot,liu2024spinquant,shao2023omniquant,ma2024affinequant,sun2024flatquant}.
GuidedQuant and YAQA bring downstream loss or output-distribution information into quantization objectives \citep{guidedquant2025,tseng2025mpadaptive}.
These results predict important pieces of PTQ degradation.
The missing object is a common pre-deployment price that compares bits, granularities, codebooks, transformations, and hardware costs as configurations of the same decision.

This paper makes that price the organizing object.
A layer configuration is an admissible weight-space PTQ choice supplied by the deployment stack, and it may change the candidate coordinate, choose a finite-format replacement, and carry a backend cost.
\cref{sec:problem-setup} turns this into a constrained selection problem over configurations under a deployment budget, with the full-precision-to-quantized forward KL as the target drift.
That ideal drift is not available before building the completed quantized model, so the selection problem forces a calibration-time substitute, namely the layer-output error covariance induced by a candidate and priced by the downstream curvature of the full-precision reference.
This covariance, together with cost, is the interface between broad PTQ mechanisms and the selector.

\cref{sec:priced-error} derives this substitute from the forward KL rather than postulating another reconstruction score.
At the full-precision reference, score cancellation removes the linear term, so the first retained local contribution is a quadratic price of candidate-induced layer-output error.
The candidate supplies the covariance of that error, while the reference model supplies the output-side curvature that makes some error directions more costly than others.
This immediately explains the role of familiar proxies because reconstruction drops the output curvature, and diagonal input scores also drop cross-channel input covariance.
Such proxies can be useful reduced scores, but they preserve configuration rankings only when the factors they drop are effectively constant over the candidates being compared.
The price therefore remains a configuration criterion rather than a reconstruction metric.

The same price also explains how PTQ configuration families enter the selector.
Finite formats, bits, granularities, vector codebooks, and palettes change the weight-perturbation moment being priced.
Equivalent transformations change the coordinate in which that perturbation is generated and the input covariance it acts on, while leaving the reference-side output metric fixed.
Under the isotropic trace reduction used for large models, each candidate can be scored by two candidate-side scalars and a cached curvature trace, then selected by a budgeted search over admissible costs.
Fixed-geometry bit allocation is therefore recovered only after the transformation, quantizer family, and granularity have been held fixed.
Before that point, PTQ is a broader configuration-selection problem over deployable choices, not an abstract continuous bit variable.

\paragraph{Contributions.}
The contribution centers on a price-prediction criterion for the configuration decision, using forward-KL and curvature signals to compare admissible PTQ choices across existing quantizer families.
\textbf{First}, the paper identifies the object selected by weight-space PTQ as an admissible layer configuration that induces a layer-output error covariance and has a deployment cost.
\textbf{Second}, it shows that full-precision-to-quantized forward KL assigns this covariance a quadratic quantization price, making reconstruction and diagonal scores interpretable as reduced prices.
\textbf{Third}, it connects common PTQ families to the same price factors by showing that finite formats change perturbation moments, transformations change the candidate coordinate and input metric, and costs constrain deployability.
\textbf{Fourth}, it turns these prices into a budgeted pre-deployment search rule that optimizes predicted quantization price directly, with mechanism evidence and scope boundaries organized around the same prediction chain.

\section{Pre-Deployment Configuration Selection in Weight-Space PTQ}
\label{sec:problem-setup}

\subsection{Reference Model, Candidate Coordinates, and Target Drift}
\label{sec:predeployment-selection}

Weight-space PTQ starts from a full-precision reference model and a calibration distribution $\Dcal$, before any weight-quantized model has been deployed.
Let the model have $L$ weight-bearing layers with reference weights $\W_1,\ldots,\W_L$.
For a calibration input $\vx$, let $\vx_l$ be the reference input to layer $l$ and $\vz_l=\W_l\vx_l$ be its reference output.
For each layer $l$, the deployment stack supplies an admissible set $\Acal_l$.
A layer configuration $\alpha_l\in\Acal_l$ is the pre-deployment object that specifies bitwidth, granularity, quantizer family, hardware format, pre-quantization transformation, or a supported combination of these choices.
Candidate-dependent layer objects carry this local choice as an argument.
When $\alpha_l$ changes coordinate before quantization, write the candidate coordinate as $\vx_l(\alpha_l)$ and $\W_l(\alpha_l)$, which should satisfy the configuration invariance:
\begin{equation}
\W_l(\alpha_l)\vx_l(\alpha_l)=\W_l\vx_l \ ,
\end{equation}
where the ordinary weight quantization (without any coordinate change) is $\vx_l(\alpha_l)=\vx_l$, $\W_l(\alpha_l)=\W_l$.
For a calibration batch, $\X_l(\alpha_l)$ collects the columns $\vx_{l,i}(\alpha_l)$, and $\X_l$ denotes the reference-coordinate batch.

The same configuration also carries a deployment cost $\Cost_l(\alpha_l)$, and the deployment budget is $B$.
A full configuration $\alpha=(\alpha_1,\ldots,\alpha_L)$ applies one layer choice to each layer.
Applying it produces a weight-quantized model with output distribution $\pi_\alpha(\cdot\mid\vx)$, while the full-precision reference distribution is $\pizero(\cdot\mid\vx)$.
The quantity to be controlled is the final output-distribution drift:
\begin{equation}
\label{eq:target-drift}
    \Jcal(\alpha)\triangleq
    \E_{\vx\sim\Dcal}\left[
    \KL\big(\pizero(\cdot\mid\vx)\,\Vert\,\pi_\alpha(\cdot\mid\vx)\big)\right]\ .
\end{equation}

If this drift were known for every feasible configuration, PTQ would be the constrained selection problem
\begin{equation}
\label{eq:ideal-configuration-selection}
    \min_{\alpha\in\Acal_1\times\cdots\times\Acal_L}\Jcal(\alpha)
    \quad \mathrm{s.t.}\quad
    \sum_{l=1}^L \Cost_l(\alpha_l)\le B \ .
\end{equation}

This constrained program names the deployment decision, but evaluating $\Jcal(\alpha)$ already requires the selected transformations, granularities, quantizer families, and bits to have produced a complete model.
PTQ therefore needs a calibration-time rule for ranking configurations before this final model-level trial.
The remaining preliminary step is to unpack $\Acal_l$ only through objects available before that trial, including the candidate coordinate, the finite-format replacement, and the deployment cost.

\subsection{Admissible PTQ Configuration Families and Costs}
\label{sec:ptq-configuration-families}

The admissible set $\Acal_l$ is deliberately broad, but the price theory will only ask what a candidate can provide before deployment.
The preceding coordinate convention lets us describe those candidates without reopening the full layerwise notation.
Fix one layer, write $\alpha$ for its local choice, and suppress the layer index.
A candidate maps the reference calibration pair $(\W,\X)$ to a floating-point coordinate $(\W(\alpha),\X(\alpha))$ that still satisfies $\W(\alpha)\X(\alpha)=\W\X$, and also supplies the finite-format replacement $\hW(\alpha)$ and the deployment cost $\Cost(\alpha)$.
The coordinate equality means that pre-quantization transformations are equivalent before rounding, while $\hW(\alpha)$ is where finite-format error enters.
The main families differ in how they choose the allowed values, how they change the coordinate, and what deployment cost they pay.

\textbf{Finite formats choose the allowed values.}
A scalar group quantizer chooses a code set $\mathcal C_g$, a scale $s_g$, and a bitwidth $b_g$, then projects each group of weights onto those values:
\begin{equation}
    \hW_g(\alpha)
    =
    s_g\,\Pi_{\mathcal C_g}(\W_g(\alpha)/s_g) \ ,
    \qquad
    \delta_g=\frac{2r_g}{2^{b_g}-1} \ ,
\end{equation}
where $\Pi_{\mathcal C_g}$ is the projection onto the code set $\mathcal C_g$.

The goal is direct compression because smaller $b_g$ and coarser sharing reduce storage and kernel cost, but increase the local weight perturbation $\hW_g(\alpha)-\W_g(\alpha)$.
Per-tensor, per-channel, group-wise, and block-wise quantizers differ by which weights share $s_g,r_g,b_g$ \citep{dong2019hawq,dong2019hawqv2}.
Hardware floating-point and microscaling formats change the same choice through exponent sharing and admissible scales \citep{sharify2024microscaling,zhang2026mxfpbenchmark}.
Vector quantizers, lattice codes, trellises, and palettes replace the scalar code set by a block codebook.
Their candidate-side moment summary is simply $\rmV_g(\alpha)\approx \E\left[\|\hW_g(\alpha)-\W_g(\alpha)\|_2^2\right]$ \citep{lee2025qpalette,chee2023quip,tseng2024quipsharp,liu2024vptq,tseng2025qtip}.

\textbf{Equivalent transformations change the coordinate before quantization.}
Scaling, affine transforms, and rotations keep the floating-point layer unchanged, but make the finite-format problem easier:
\begin{equation}
    \X(\alpha)=\Pmat\X,\qquad
    \W(\alpha)=\W\Pmat^{-1},\qquad
    \W(\alpha)\X(\alpha)=\W\X \ .
\end{equation}

These transformations keep the floating-point target fixed while moving quantization difficulty between the weight coordinate and the input coordinate \citep{xiao2023smoothquant,lin2024awq,shao2023omniquant,ma2024affinequant,sun2024flatquant}.
Diagonal scaling $ \Pmat=\diag(s) $ changes channel ranges and input magnitudes, which is why activation-aware scaling can protect important channels.
Orthogonal rotations $ \Pmat=\rmR,\ \rmR\tp\rmR=\I $ preserve Euclidean geometry while spreading outliers or improving incoherence before scalar, lattice, or trellis quantization \citep{ashkboos2024quarot,liu2024spinquant,chee2023quip,tseng2024quipsharp,tseng2025qtip}.
Fast Hadamard-sign transforms $ \Pmat=\rmH\rmD $ are the cheap deployable version when a full learned rotation is too expensive.

\textbf{Deployment costs define the deployable candidate set.}
A candidate enters $\Acal$ only if the backend supports its format, metadata, and kernels.
Along with the candidate coordinate and replacement weight, the selector therefore receives the deployment cost of the finite-format choice or transformation being scored:
\begin{equation}
    \Cost(\alpha)
    =
    \Cost_{\ntxt{bits}}(\alpha)
    +\Cost_{\ntxt{meta}}(\alpha)
    +\Cost_{\ntxt{kernel}}(\alpha)\ .
\end{equation}
Bits and fractional-bit palettes mainly affect storage and arithmetic cost.
Groups and codebooks add metadata or lookup overhead, and transformations may be folded offline or require online kernels \citep{yao2021hawqv3,lee2025qpalette,lin2025qserve}.
The cost does not measure accuracy degradation and only records which candidates are deployable under the budget.
What remains missing is the common output-distribution price that compares these deployable candidates before the completed quantized model exists.

\section{Priced Layer-Output Error for Configuration Selection}
\label{sec:priced-error}

The formulation specifies the constrained PTQ decision, but not a deployable score for ranking configurations before the completed model exists.
The target drift in \eqref{eq:target-drift} already names the right output-distribution quantity, and the missing step is to turn that model-level drift into a local calibration-time price for candidate layer configurations.
The forward-KL expansion supplies this price because its linear term vanishes at the full-precision reference.

\subsection{Forward KL Induces the Layer-Output Error Price}
\label{sec:forward-kl}

The target $\Jcal(\alpha)$ is the right fidelity measure, but evaluating it already requires a completed quantized model.
The pre-deployment score must therefore be the price that the full-precision reference assigns locally to the error injected by a candidate.
Using the candidate coordinate introduced in the setup, quantization produces $\hW_l(\alpha_l)=\W_l(\alpha_l)+\Dmat_l(\alpha_l)$ and injects $\Dmat_l(\alpha_l)\vx_l(\alpha_l)$ at the reference layer output $\vz_l=\W_l\vx_l$.
Set the input covariance $\Cxl(\alpha_l)\triangleq \E_{\vx\sim\Dcal}[\vx_l(\alpha_l)\vx_l(\alpha_l)\tp]$.
For a calibration input $\vx$, write $\Jcal_{\vx,l}(\vz)$ for the per-sample forward-KL drift viewed as a function of the layer output $\vz_l$, with the rest of the network held at the full-precision reference.
The two layer statistics that price the perturbation are
\begin{equation}
    \begin{aligned}
    \Hz_l
    &\triangleq
    \E_{\vx\sim\Dcal}\left[
    \left.\nabla_{\vz}^2\Jcal_{\vx,l}(\vz)\right|_{\vz=\vz_l}\right]\ ,\\
    \Cerr_l(\alpha_l)
    &\triangleq
    \Dmat_l(\alpha_l)\Cxl(\alpha_l)\Dmat_l(\alpha_l)\tp \ .
    \end{aligned}
\end{equation}
The first is the output-side curvature of the target drift, and the second is the candidate-induced layer-output error covariance.
A PSD Gauss-Newton (GN) surrogate may replace $\Hz_l$ in implementation, but the formal object being priced is still the covariance generated by the candidate weight perturbation.

\Needspace{10\baselineskip}
\begin{theorem}[Forward KL induces a quadratic quantization price]
\label{thm:price-predictor}
Assume the forward-KL objective is three-times differentiable on the weight segment from the full-precision reference to the candidate quantized model.
At the reference, the first-order term vanishes.
Keeping the block-diagonal second-order layer terms defined above, any candidate configuration $\alpha$ satisfies:
\begin{equation}
\label{eq:realized-configuration-price}
    \Jcal(\alpha)=\Price(\alpha)+r(\alpha)\ ,
    \qquad
    \Price(\alpha)\triangleq\sum_{l=1}^L \Price_l(\alpha_l)\ ,
    \qquad
    \Price_l(\alpha_l)\triangleq
    \frac12\Tr\left(\Hz_l\Cerr_l(\alpha_l)\right)\ .
\end{equation}
Stack $\vdelta_l\triangleq\operatorname{vec}(\Dmat_l(\alpha_l))$ into $\vdelta$.
Let $\Hz_{\theta}$ be the exact Hessian of $\Jcal$ with respect to these stacked layer weights, and let $\tilde{\Hz}_{\theta}$ be the retained block-diagonal Hessian satisfying $\Price(\alpha)=\frac12\vdelta\tp\tilde{\Hz}_{\theta}\vdelta$.
If $M(\alpha)$ bounds the third derivative along the perturbation segment and $\Price(\alpha)>0$, then
\begin{equation}
\label{eq:price-remainder-bound}
    \abs{r(\alpha)}\le \eta(\alpha)\Price(\alpha)\ ,
    \qquad
    \eta(\alpha)\triangleq
    \frac{
    \frac12\norm{\Hz_{\theta}-\tilde{\Hz}_{\theta}}_2\norm{\vdelta}_2^2
    +\frac{M(\alpha)}{6}\norm{\vdelta}_2^3
    }{\Price(\alpha)} \ .
\end{equation}
\end{theorem}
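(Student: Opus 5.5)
The plan is a Taylor expansion of $\Jcal$ in the stacked layer weights along the segment from the full-precision reference to the candidate, carried out in the candidate coordinate. By configuration invariance, $\W_l(\alpha_l)\vx_l(\alpha_l)=\W_l\vx_l$, the reference network written in candidate coordinates is the same function. So define $\theta_0=(\operatorname{vec}\W_l(\alpha_l))_l$ and $\theta_1=\theta_0+\vdelta$, and set $g(t)=\Jcal(\theta_0+t\vdelta)$ for $t\in[0,1]$, with $g(1)=\Jcal(\alpha)$. Taylor's theorem with Lagrange remainder, valid under the three-times differentiability assumption, gives
\[
g(1)=g(0)+g'(0)+\tfrac12 g''(0)+\tfrac16 g'''(\tau)
\]
for some $\tau\in(0,1)$.

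The zeroth- and first-order terms vanish. We have $g(0)=\E\,\KL(\pizero\Vert\pizero)=0$. For the first derivative, $g'(0)=\E_{\vx}\E_{y\sim\pizero}[-\nabla_\theta\log\pi_\theta(y\mid\vx)]\tp\vdelta$ at $\theta_0$. This is zero by score cancellation, since $\E_{\pizero}[\nabla\log\pi]=\nabla\sum_y\pi=0$; interchanging derivative and expectation is justified by smoothness. Hence $g''(0)=\vdelta\tp\Hz_\theta\vdelta$.

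Next I split this quadratic term as $\vdelta\tp\tilde\Hz_\theta\vdelta+\vdelta\tp(\Hz_\theta-\tilde\Hz_\theta)\vdelta$. The main computational step is to verify that the block-diagonal part equals $2\Price(\alpha)$, i.e. that the diagonal block for layer $l$ gives $\Tr(\Hz_l\Cerr_l)$.
\begin{itemize}
\item The perturbation $\Dmat_l$ enters only through $\vz=\W_l(\alpha_l)\vx_l(\alpha_l)+\Dmat_l\vx_l(\alpha_l)$, which is linear in $\Dmat_l$.
\item The chain rule then gives $\partial^2_{\Dmat_l}\Jcal_{\vx}[\Dmat_l,\Dmat_l]=(\Dmat_l\vx_l(\alpha_l))\tp\nabla_\vz^2\Jcal_{\vx,l}(\Dmat_l\vx_l(\alpha_l))$, with no gradient-times-second-derivative term because $\vz$ is affine in $\Dmat_l$.
\item Writing this as a trace and taking the expectation, assuming $\nabla^2_\vz\Jcal_{\vx,l}$ is uncorrelated with $\vx_l\vx_l\tp$ (or, more carefully, taking $\Hz_l\Cerr_l$ as the statement's definition of the retained block, which is how I read "retained block-diagonal Hessian satisfying $\Price=\frac12\vdelta\tp\tilde\Hz_\theta\vdelta$"), yields $\Tr(\Hz_l\Dmat_l\Cxl\Dmat_l\tp)$.
\end{itemize}
Since the statement \emph{defines} $\tilde\Hz_\theta$ by $\Price(\alpha)=\frac12\vdelta\tp\tilde\Hz_\theta\vdelta$, I would state this identification as the content of "keeping the block-diagonal second-order layer terms" and not belabor the decoupling assumption.

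This gives $\Jcal(\alpha)=\Price(\alpha)+r(\alpha)$ with
\[
r(\alpha)=\tfrac12\vdelta\tp(\Hz_\theta-\tilde\Hz_\theta)\vdelta+\tfrac16 g'''(\tau).
\]
The first term is bounded by $\frac12\norm{\Hz_\theta-\tilde\Hz_\theta}_2\norm{\vdelta}_2^2$ using the operator norm. For the second, $|g'''(\tau)|=|D^3\Jcal[\vdelta,\vdelta,\vdelta]|\le M(\alpha)\norm{\vdelta}_2^3$ by the definition of $M$. Dividing by $\Price(\alpha)>0$ gives $\abs{r}\le\eta\Price$, which is exactly \eqref{eq:price-remainder-bound}.

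The main obstacle is the identification of the retained block with $\frac12\Tr(\Hz_l\Cerr_l)$. The expectation of a product, $\nabla^2_\vz\Jcal_{\vx,l}$ times $\vx_l\vx_l\tp$, is not the product of expectations. I would handle this by absorbing the discrepancy into $\Hz_\theta-\tilde\Hz_\theta$: define $\tilde\Hz_\theta$ blockwise as $\Cxl\otimes\Hz_l$, the Kronecker-factored form, so that the exact per-sample expectation minus this factorization is charged to the first remainder term. The bound then holds verbatim.
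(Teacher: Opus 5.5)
Your proposal is correct and follows essentially the same route as the paper's proof. Score cancellation removes the linear term, and the Gauss--Newton chain rule (whose second-derivative term vanishes because $\vz_l$ is linear in the layer weights) gives the per-sample block $\Hz_l(\vx)\otimes\vx_l\vx_l\tp$. Replacing $\Hz_l(\vx)$ by its average $\Hz_l$ defines the retained block, and both that factorization error and the cross-layer blocks are charged to $\Hz_{\theta}-\tilde{\Hz}_{\theta}$ before the operator-norm and third-derivative bounds are applied. Writing the expansion as a one-variable Lagrange remainder for $g(t)$ and stating $g(0)=0$ explicitly make your write-up slightly more careful than the paper's, but the argument is the same.
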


% \textbf{Remark.}
The proof is in \cref{app:proof-price-predictor}.
The theorem turns the configuration into a priced covariance, where the candidate supplies $\Cerr_l(\alpha_l)$ and the reference model supplies the curvature $\Hz_l$ that makes some output directions expensive.
The relative diagnostic $\eta(\alpha)$ has only one role, explaining why the omitted term is negligible when the retained Hessian is accurate along the quantization direction and the perturbation remains local.
In this regime the selector optimizes the quadratic price itself, and the factors of that price determine how PTQ configurations can be compared.

\begin{remark}[Reconstruction and diagonal scores are price reductions]
\label{rem:proxy-reductions}
Fix a layer and suppress $l$ and the candidate argument.
Let $\X=[\vx^{(1)},\ldots,\vx^{(N)}]$, $\Cx=N^{-1}\X\X\tp$, and $\Dmat_{\W}=\hW-\W$.
The full layer price $\Price=\frac12\Tr(\Hz\Dmat_{\W}\Cx\Dmat_{\W}\tp)$ becomes the layer reconstruction score by the substitution $\Hz\mapsto\I$, giving $\Price_{\ntxt{rec}}=\frac12\Tr(\Dmat_{\W}\Cx\Dmat_{\W}\tp) =\frac{1}{2N}\norm{\Dmat_{\W}\X}_F^2$.
A diagonal input-statistic score makes one further substitution, $\Cx\mapsto\diag(\Cx)$, giving $\Price_{\ntxt{diag}}=\frac12\sum_j[\Cx]_{jj}\norm{[\Dmat_{\W}]_{:,j}}_2^2$.
Thus reconstruction drops downstream output curvature, while diagonal scores also drop cross-channel input covariance.
These reductions preserve candidate rankings only when the dropped factors are effectively constant over the candidate set.
Otherwise reliable selection keeps the factors of the full price explicit.
\end{remark}

\subsection{From Candidate Families to Price Factors}
\label{sec:price-factor-perspectives}

The price theorem reduces configuration selection to one missing ingredient.
For every candidate available before deployment, the selector needs the layer-output error covariance that the candidate is expected to induce.
The candidate families in \Cref{sec:ptq-configuration-families} expose exactly the objects needed for that translation, including a candidate coordinate $(\W_l(\alpha_l),\X_l(\alpha_l))$, a quantized replacement $\hW_l(\alpha_l)$ or perturbation moment envelope $\rmV_l(\alpha_l)$, and a deployment cost $\Cost_l(\alpha_l)$.
The reference-side curvature $\Hz_l$ is not changed by these choices.
Finite formats change the perturbation moment, equivalent transformations change the coordinate in which that perturbation is generated, and the cost is carried forward to the search constraint.
These two configuration classes therefore explain how candidates become covariances, and the scoring reduction is a computational specialization rather than an additional candidate family.

\textbf{Configuration class for finite formats, bits, and granularity.}
This class fixes the finite values a weight may take.
Bitwidth, per-tensor, per-channel or group-wise sharing, scalar and microscaling formats, vector formats, and codebooks all determine the perturbation moment created by the replacement weight.
Before the completed model is built, the candidate can therefore provide the moment envelope $\rmV_l(\alpha_l)$ introduced in \cref{sec:ptq-configuration-families}.
Under the usual zero-mean independent-entry rounding envelope, with $\vh_l$ and $\vc_l$ collecting the diagonals of $\hHz_l$ and $\Cxl(\alpha_l)$, the expected price obeys:
\begin{equation}
\label{eq:finite-format-bit-bound}
    \E[\hPrice_l(\alpha_l)]
    \le
    \frac12\,\vh_l\tp\rmV_l(\alpha_l)\vc_l \ .
\end{equation}
\cref{app:proof-finite-format-price-bound} derives the bound from the scalar rounding model.
The display shows where the familiar bit law enters.
For a per-layer uniform range $r_l$, $\delta_b=2r_l/\Levels_b$ and $\rmV_l(b)=\delta_b^2\mathbf{1}\mathbf{1}\tp/12$, which gives
\begin{equation}
    \E[\hPrice_l(b)]
    \le
    \frac{r_l^2}{6\Levels_b^2}\Tr(\hHz_l)\Tr(\Cxl)\ ,
    \qquad \Levels_b=2^b-1 \ .
\end{equation}
Thus the $4^{-b}$ decay is not a standalone allocation law.
It appears only after the format has been converted into the perturbation moment being priced.
Granularity changes which entries share a range or step size, while vector and codebook quantizers use the same price slot by replacing scalar variances with empirical block perturbation moments.

\textbf{Configuration class for equivalent transformations.}
This class contains pre-quantization transformations that preserve the floating-point layer while changing the coordinate in which finite-format error is generated.
For an invertible transformation $\tx=\Pmat\vx$ and $\tW=\W\Pmat^{-1}$, the exact floating-point output is unchanged, but the quantized residual $\tDelta_b(\Pmat)\triangleq Q_b(\W\Pmat^{-1})-\W\Pmat^{-1}$ is generated on the transformed weight and is priced against the transformed input covariance.
An optimized transformation family $\mathcal P_l$ therefore contributes the candidate covariance through
\begin{equation}
\label{eq:weight-candidate-moment-price}
\begin{aligned}
    \hCerr_l(\Pmat,b)
    &=
    \E\left[
    \tDelta_b(\Pmat)(\Pmat\Cx\Pmat\tp)\tDelta_b(\Pmat)\tp
    \right]\ ,\\
    \Pmat^\star
    &\in
    \argmin_{\Pmat\in\mathcal P_l}
    \frac12\Tr\left(\hHz_l\hCerr_l(\Pmat,b)\right)\ .
\end{aligned}
\end{equation}
This display is the transformation analogue of the finite-format moment bound.
The matrix $\hHz_l$ remains a fixed reference-side price, while $\Pmat\Cx\Pmat\tp$ and $\tDelta_b(\Pmat)$ are the candidate-side factors a transformation can trade against each other.
The affine case isolates the clean input-metric effect before the finite-format residual is reintroduced.

\begin{theorem}[Affine preprocessing has a whitening optimum]
\label{thm:optimal-affine-input-metric}
Let $\Cx\succ0$ with dimension $n$ and set $\lambda_g=(\det\Cx)^{1/n}$.
Among determinant-preserving invertible input transformations,
\begin{equation}
    \min_{\Pmat\in\GL(n):\,\abs{\det\Pmat}=1}
    \lambda_{\max}(\Pmat\Cx\Pmat\tp)
    =
    \lambda_g \ .
\end{equation}
Equality holds if and only if $\Pmat\Cx\Pmat\tp=\lambda_g\I$.
Equivalently, if $\Cx=\rmU\Lambda\rmU\tp$, then $\Pmat=\lambda_g^{1/2}\Qmat\Lambda^{-1/2}\rmU\tp$ for some $\Qmat\in\Orth(n)$.
\end{theorem}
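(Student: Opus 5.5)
The plan is to combine a determinant identity with the AM--GM-type inequality between the largest eigenvalue and the geometric mean of the eigenvalues. Any $\Pmat\in\GL(n)$ with $\abs{\det\Pmat}=1$ gives a congruent matrix $\Smat\triangleq\Pmat\Cx\Pmat\tp$ that is symmetric positive definite, since $\Cx\succ0$ and $\Pmat$ is invertible. Its determinant is
\begin{equation}
\det\Smat=(\det\Pmat)^2\det\Cx=\det\Cx=\lambda_g^n .
\end{equation}
Write $\mu_1,\ldots,\mu_n>0$ for the eigenvalues of $\Smat$. Then
\begin{equation}
\lambda_{\max}(\Smat)^n\ge\prod_i\mu_i=\det\Smat=\lambda_g^n ,
\end{equation}
so $\lambda_{\max}(\Pmat\Cx\Pmat\tp)\ge\lambda_g$ for every admissible $\Pmat$. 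This gives the lower bound on the minimum.

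Next I will show the bound is attained and verify the explicit family. Take the eigendecomposition $\Cx=\rmU\Lambda\rmU\tp$ with $\Lambda\succ0$ diagonal, and set $\Pmat=\lambda_g^{1/2}\Qmat\Lambda^{-1/2}\rmU\tp$ for any $\Qmat\in\Orth(n)$. A direct computation gives $\Pmat\Cx\Pmat\tp=\lambda_g\Qmat\Qmat\tp=\lambda_g\I$. For the determinant,
\begin{equation}
\abs{\det\Pmat}=\lambda_g^{n/2}(\det\Lambda)^{-1/2}=\lambda_g^{n/2}\lambda_g^{-n/2}=1 ,
\end{equation}
so $\Pmat$ is admissible. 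Hence the minimum equals $\lambda_g$ and is attained.

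For the equality characterization, suppose $\lambda_{\max}(\Smat)=\lambda_g$. Then every $\mu_i\le\lambda_g$ while $\prod_i\mu_i=\lambda_g^n$. If some $\mu_j<\lambda_g$, the product would be strictly below $\lambda_g^n$, so all $\mu_i=\lambda_g$. A symmetric matrix with all eigenvalues equal to $\lambda_g$ is $\lambda_g\I$. Conversely, $\Smat=\lambda_g\I$ clearly gives $\lambda_{\max}=\lambda_g$.

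The step needing the most care is showing that every $\Pmat$ with $\Pmat\Cx\Pmat\tp=\lambda_g\I$ has the stated form. Set $\Qmat\triangleq\lambda_g^{-1/2}\Pmat\rmU\Lambda^{1/2}$. Then
\begin{equation}
\Qmat\Qmat\tp=\lambda_g^{-1}\Pmat\rmU\Lambda\rmU\tp\Pmat\tp=\lambda_g^{-1}\Pmat\Cx\Pmat\tp=\I ,
\end{equation}
so $\Qmat\in\Orth(n)$. Solving for $\Pmat$ gives $\Pmat=\lambda_g^{1/2}\Qmat\Lambda^{-1/2}\rmU\tp$, using that $\rmU$ is orthogonal. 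Any such $\Pmat$ automatically has $\abs{\det\Pmat}=1$, so the family is exactly the set of minimizers. This whitening parametrization is the only nonroutine step, and it is a short algebraic inversion.
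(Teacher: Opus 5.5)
Your proof is correct and follows essentially the same route as the paper. Both use the determinant identity $\det(\Pmat\Cx\Pmat\tp)=\det\Cx$ together with $\lambda_{\max}(\rmM)\ge(\det\rmM)^{1/n}$, characterize equality by all eigenvalues being equal, and recover the parametrization through the inversion $\Qmat=\lambda_g^{-1/2}\Pmat\rmU\Lambda^{1/2}$. You are slightly more explicit than the paper in checking that the whitening family satisfies $\abs{\det\Pmat}=1$ and in spelling out the strict-inequality step of the equality case.
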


The theorem says only what a fully optimized affine transform can do to the input metric.
Under a determinant normalization, the smallest possible worst-direction variance is achieved by whitening.
If the quantizer residual were fixed and isotropic, this would reduce the candidate-side covariance to a scalar multiple of the identity and leave only the output-curvature trace.
In weight PTQ, however, $\Pmat$ also changes $\W\Pmat^{-1}$ and therefore the finite-format residual.
Practical transformations consequently optimize a restricted version of \eqref{eq:weight-candidate-moment-price}, balancing input-metric flattening against transformed-weight distortion.
\cref{app:proof-optimal-affine-input-metric} proves the whitening claim, and \cref{app:proof-configuration-invariance} records the covariance identity used by the transformed price.

Restricted transformation families inherit the same price object with less freedom in $\Pmat$.
Diagonal scaling $\Smat=\diag(s)$ gives
\begin{equation}
\label{eq:scaling-tradeoff}
    \hPrice_{\Smat,b}
    =
    \frac12\Tr\left(
    \hHz\,
    \E[\tDelta_b(\Smat)(\Smat\Cx\Smat)\tDelta_b(\Smat)\tp]
    \right)\ ,
    \quad
    \tDelta_b(\Smat)=Q_b(\W\Smat^{-1})-\W\Smat^{-1}\ .
\end{equation}
Scaling can lower price either by reducing the residual of $\W\Smat^{-1}$ or by reshaping variances in $\Smat\Cx\Smat$, but it cannot remove normalized input correlations (\cref{app:diagonal-correlation}).
Orthogonal rotations obey the same transformed-price form,
\begin{equation}
    \hPrice_{\rmR,b}
    =
    \frac12\Tr\left(
    \hHz\,
    \E[\tDelta_b(\rmR)(\rmR\Cx\rmR\tp)\tDelta_b(\rmR)\tp]
    \right)\ ,
    \quad
    \tDelta_b(\rmR)=Q_b(\W\rmR\tp)-\W\rmR\tp \ .
\end{equation}
Because $\rmR\Cx\rmR\tp$ preserves the eigenvalues of $\Cx$, rotations help through the finite-format residual, range, or coherence rather than through input whitening.
\cref{app:rotation-coherence} gives the corresponding scalar-quantizer distortion bound.
Transformations therefore change the candidate covariance, but the output metric that prices that covariance remains the reference-side $\Hz_l$.

\textbf{Scoring reduction for isotropic trace price.}
This reduction is not a configuration class.
It is the low-cost statistic used after a candidate family has made both the input metric and the perturbation moment close to isotropic.
The covariance-level formulas remain the definition of price, but an approximately isotropic candidate lets the selector avoid forming full matrices.
If $\Cx=\rmU\Lambda\rmU\tp$ and $\Pmat=\sigma\Qmat\Lambda^{-1/2}\rmU\tp$, then $\Pmat\Cx\Pmat\tp=\sigma^2\I$.
When the candidate perturbation moment is $\Rerr_l(\alpha_l)\triangleq
\E[\Dmat_l(\alpha_l)\Dmat_l(\alpha_l)\tp]
=\omega_l(\alpha_l)\I+\rmA_l(\alpha_l)$, the price reduces to
\begin{equation}
\label{eq:trace-reduction}
    \Price_l(\alpha_l)
    =
    \frac12\sigma_l^2(\alpha_l)\omega_l(\alpha_l)\Tr(\Hz_l)
    +
    \frac12\sigma_l^2(\alpha_l)
    \Tr\left(\Hz_l\rmA_l(\alpha_l)\right)\ .
\end{equation}
If the anisotropic residual $\rmA_l$ is small, the candidate is ranked by the single coefficient $\sigma_l^2(\alpha_l)\omega_l(\alpha_l)$ times the cached reference-side trace $\Tr(\Hz_l)$.
The reduction does not replace $\Hz_l$ by $\I$.
It keeps the output metric through its trace only after the candidate-side covariance has been made nearly isotropic.
\cref{app:proof-trace-reduction} gives the residual bound, and \cref{app:diagnostic-structures} records the diagnostics for regimes where diagonal curvature, off-diagonal input covariance, or imatrix-style reductions drop factors that are not constant over the candidate set.
Once the trace reduction is valid, each candidate can be scored by two candidate-side scalars and one reference-side curvature trace.

\subsection{Budgeted Price-Guided Configuration Search}
\label{sec:configuration-selection}

The trace reduction gives each candidate a computable search statistic, while $\Cost_l(\alpha_l)$ decides whether that scored covariance is deployable under the budget.
This statistic belongs to the search procedure, not to the configuration taxonomy.
For calibration samples $\{\vx^{(i)}\}_{i=1}^N$, input dimension $n_l$, and output dimension $m_l$, the implementation estimates the candidate-side input scale and perturbation scale as
\begin{equation}
    \hat\sigma_l^2(\alpha_l)\triangleq
    \frac{1}{n_l}\Tr\left(\frac1N\sum_{i=1}^N
    \vx_{l,i}(\alpha_l)(\vx_{l,i}(\alpha_l))\tp\right)\ ,
    \qquad
    \hat\omega_l(\alpha_l)\triangleq
    \frac{1}{m_l}\Tr\left(
    \Dmat_l(\alpha_l)\Dmat_l(\alpha_l)\tp\right)\ .
\end{equation}
When the candidate is represented by a moment envelope rather than an explicit replacement weight, the same normalized trace is taken from that envelope.
The reference model supplies $\hat\tau_l\triangleq\Tr(\hHz_l)$, estimated directly in large models by a PSD Gauss-Newton (GN) trace estimator such as Hutchinson \citep{hutchinson1990stochastic} rather than by forming $\hHz_l$.
The implemented score is therefore
\begin{equation}
\label{eq:implemented-trace-price}
    \hPrice_l(\alpha_l)
    \leftarrow
    \frac12
    \hat\sigma_l^2(\alpha_l)
    \hat\omega_l(\alpha_l)
    \hat\tau_l \ .
\end{equation}
The candidate supplies $\hat\sigma_l^2(\alpha_l)$, $\hat\omega_l(\alpha_l)$, and $\Cost_l(\alpha_l)$, while the reference model supplies the cached curvature trace $\hat\tau_l$.
Thus the expensive model-level trial is replaced by a calibration-time table of candidate prices and costs.

The remaining operation is the constrained selection itself.
As in training-free NAS \citep{shu2022nasi,shu2022hnas}, the selector scores many candidates before building their completed models.
The ideal but unavailable target in \eqref{eq:ideal-configuration-selection} is replaced by the predicted configuration price $\hPrice(\alpha)\triangleq\sum_{l=1}^L\hPrice_l(\alpha_l)$, while the deployment constraint is left unchanged:

\begin{equation}
\label{eq:configuration-selection}
    \alpha^\star
    \in \arg\min_{\alpha\in\Acal_1\times\cdots\times\Acal_L}
    \hPrice(\alpha)
    \quad \mathrm{s.t.}\quad
    \sum_{l=1}^L \Cost_l(\alpha_l)\le B \ .
\end{equation}
Equation~\eqref{eq:configuration-selection} is the pre-deployment replacement for exhaustive deployment trials.
The set $\Acal_l$ supplies admissible choices, $\Cost_l(\alpha_l)$ supplies deployment cost, and the score in \eqref{eq:implemented-trace-price} supplies the objective being minimized.
\cref{alg:price-guided-ptq} gives the resulting selector.

\vspace{-0.6em}
\begin{algorithm}[H]
\caption{Price-Guided Configuration Search}
\label{alg:price-guided-ptq}
\begin{algorithmic}[1]
\Require Reference model, calibration set, $\{(\Acal_l,\Cost_l)\}_{l=1}^L$, budget $B$.
\Ensure Selected configuration $\alpha^\star$.
\State Estimate and cache $\{\hat\tau_l=\Tr(\hHz_l)\}_{l=1}^L$ once from the full-precision forward-KL curvature.
\For{$l=1,\ldots,L$}
    \For{$\alpha_l\in\Acal_l$}
        \State Estimate candidate scalars $\hat\sigma_l^2(\alpha_l)$ and $\hat\omega_l(\alpha_l)$, then score
        $\hPrice_l(\alpha_l)\leftarrow\frac12\hat\sigma_l^2(\alpha_l)\hat\omega_l(\alpha_l)\hat\tau_l$.
    \EndFor
\EndFor
\State Select $\alpha^\star$ by solving the budgeted search problem in
\eqref{eq:configuration-selection}.
\State \Return $\alpha^\star$.
\end{algorithmic}
\end{algorithm}
\vspace{-2em}

\section{Experiments}
\label{sec:exp}
\label{sec:evidence}

The price-guided selector in \cref{alg:price-guided-ptq} is useful only if its calibration-time scores preserve the target decision before the quantized model is built.
The experiments therefore test the prediction chain in the same order as the method.
First, controlled perturbation studies ask whether the realized price in \eqref{eq:realized-configuration-price} and the implemented trace price in \eqref{eq:implemented-trace-price} track the full-precision-to-quantized KL drift (\cref{sec:exp-realized-price-kl}).
Second, end-to-end weight-only PTQ experiments ask whether optimizing the predicted price reduces configuration-search cost while preserving perplexity and downstream task accuracy (\cref{sec:exp-configuration-selection}).

\subsection{Realized Price and Implemented-Trace-Price Align with KL Drift}
\label{sec:exp-realized-price-kl}

The first empirical link is the one that can invalidate the selector.
If the realized layer-output price \eqref{eq:realized-configuration-price} and the implemented trace price \eqref{eq:implemented-trace-price} fail to track the final KL drift, then the price is not a valid surrogate for the target in \eqref{eq:target-drift}.
For each controlled candidate, we build the corresponding quantized perturbation, evaluate the full-precision-to-quantized KL drift on held-out calibration inputs, and compare it with the realized price and the implemented trace price.
We evaluate this comparison on OPT-125M \citep{opt} and Qwen3-0.6B \citep{qwen3}, across transformation, bitwidths, and granularity candidates. 
In particular, the transformation includes three types of whitening transformations (see \cref{app:whitening-normalizations}) and the no-transformation baseline.
We provide the detailed experimental setup in \cref{app:experiment-setup}.

\begin{figure}[htbp]
    \centering
    \vspace{-5mm}
    \includegraphics[width=\linewidth]{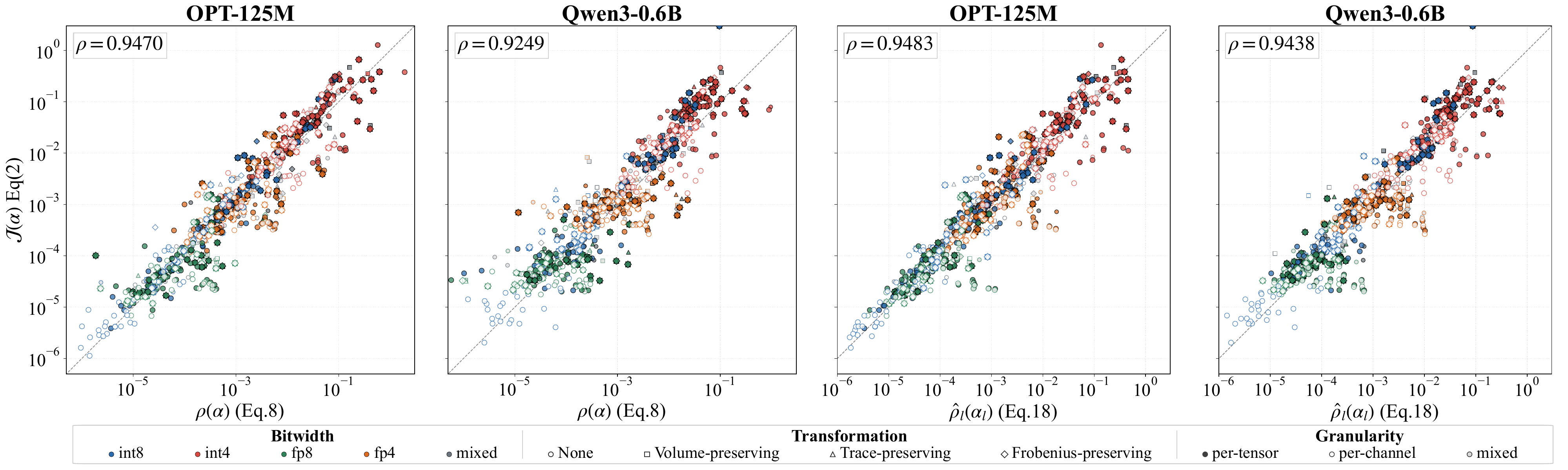}
    \caption{Realized quantization price \eqref{eq:realized-configuration-price} (left two panels) and implemented trace price \eqref{eq:implemented-trace-price} (right two panels) versus full-precision-to-quantized KL drift \eqref{eq:target-drift} under controlled perturbations. Each point is a candidate configuration, with marker styles indicating transformation constraints and colors indicating bitwidth or granularity choices. The reported $\rho$ summarizes the log-scale alignment.}
    \label{fig:realized-price-kl}
    \vspace{-5mm}
\end{figure}

\Cref{fig:realized-price-kl} shows that the realized price closely tracks the measured KL drift, with correlations $\rho=0.9470$ on OPT-125M and $\rho=0.9249$ on Qwen3-0.6B.
This is the expected behavior of the price theorem in \cref{thm:price-predictor}.
Although the controlled perturbation is applied to a single layer, we also report a two-layer price, shown by smaller markers in \cref{fig:realized-price-kl}, which sums the price of the perturbed layer and a second layer with a different candidate perturbation.
The two-layer price also tracks the KL drift, supporting the local linearized decomposition in \cref{thm:price-predictor} and consistent with the Linearity Theorem's claim that price is a local property of the quantization direction rather than a global property of the completed quantized model \citep{malinovskii2024linearity}.

The implemented trace price is a stricter test because it removes the full covariance calculation and keeps only the candidate-side scales $\hat\sigma_l^2(\alpha_l)$, $\hat\omega_l(\alpha_l)$, and the cached reference-side trace $\hat\tau_l$.
Despite this reduction, the right two panels retain nearly the same alignment with KL drift, with $\rho=0.9483$ on OPT-125M and $\rho=0.9438$ on Qwen3-0.6B.
This supports the intended role of \eqref{eq:implemented-trace-price} as a low-cost pre-deployment statistic rather than a reconstruction score, since it preserves the ordering induced by the realized quantization price in this controlled regime.

\subsection{Price-Guided Configuration Selection}
\label{sec:exp-configuration-selection}

After the KL comparisons establish that price is a meaningful ranking signal, the remaining question is whether that signal improves the discrete PTQ decision made before deployment.
We evaluate this decision on Llama-3.2-1B across three representative configuration classes, including bit allocation, transformation selection, and granularity allocation (refer to \cref{app:experiment-setup} for details).

\begin{table}[htbp]
\centering
\caption{Price-guided configuration selection results on Llama-3.2-1B.
The FP16 row is the unquantized reference.
PPL is evaluated on WikiText2.
Selection time reports the core configuration decision procedure, which excludes the subsequent quantization and evaluation pass.
The best and second-best results are highlighted in \textbf{bold} and \underline{underline}, respectively.}
\label{tab:configuration-selection-results}
\footnotesize
\setlength{\tabcolsep}{4.8pt}
\renewcommand{\arraystretch}{1.08}
\definecolor{ConfigBandGold}{RGB}{247,232,205}
\definecolor{ConfigBandBlue}{RGB}{221,230,250}
\definecolor{ConfigBandGreen}{RGB}{218,237,222}
\definecolor{ConfigAvg}{RGB}{255,249,207}
\newcommand{\configavg}[1]{\begingroup\setlength{\fboxsep}{1.2pt}\colorbox{ConfigAvg}{\strut\makebox[3.0em][c]{#1}}\endgroup}
\newcommand{\configband}[2]{\multicolumn{9}{@{}c@{}}{\begingroup\setlength{\fboxsep}{3pt}\colorbox{#1}{\makebox[\dimexpr\linewidth-2\fboxsep][c]{\strut\textbf{\small{#2}}}}\endgroup}}
\newcommand{\confighead}[2]{\begin{tabular}[c]{@{}c@{}}\textbf{#1}\\\textbf{#2}\end{tabular}}
\newcommand{\configpm}[2]{\begin{tabular}[c]{@{}c@{}}#1\\[-1pt]$\pm$ #2\end{tabular}}
\begin{tabular}{@{}lcccccccc@{}}
\toprule
\textbf{Method} &
\textbf{Time (s) $\downarrow$} &
\textbf{PPL $\downarrow$} &
\textbf{BoolQ $\uparrow$} &
\textbf{TruthfulQA $\uparrow$} &
\textbf{PIQA $\uparrow$} &
\textbf{WinoGrande $\uparrow$} &
\textbf{WiC $\uparrow$} &
\configavg{\textbf{Avg. $\uparrow$}} \\
\midrule
FP16 & - & 9.75 & 63.82 & 38.48 & 54.46 & 60.14 & 44.83 & \configavg{52.35} \\
\midrule
\configband{ConfigBandGold}{Bit Allocation} \\
\midrule
HIGGS \citep{malinovskii2024linearity} & \underline{813} & 12.50 & \underline{58.23} & \underline{40.24} & \underline{49.67} & \textbf{56.91} & \underline{47.81} & \configavg{\underline{50.57}} \\
AMQ \citep{lee2025amq} & 11635 & \textbf{11.05} & 55.26 & 40.12 & \textbf{50.82} & \underline{56.04} & \textbf{48.43} & \configavg{50.13} \\
\cmidrule(){1-9}
Ours & \textbf{579} & \underline{12.22} & \textbf{62.42} & \textbf{41.85} & 49.51 & 55.72 & 46.71 & \configavg{\textbf{51.24}} \\
\midrule
\configband{ConfigBandBlue}{Transformation Selection} \\
\midrule
CALM-CKA \citep{zhang2026calm} & 703 & \underline{12.80} & \underline{50.92} & 37.59 & 49.51 & \underline{56.12} & \textbf{51.88} & \configavg{\underline{49.20}} \\
Random Selection & - & 22.68 & 48.00 & \textbf{39.53} & \underline{49.75} & 54.56 & \underline{49.84} & \configavg{48.34} \\
\cmidrule(){1-9}
Ours & \textbf{657} & \textbf{11.53} & \textbf{53.64} & \underline{38.67} & \textbf{49.78} & \textbf{57.30} & 49.69 & \configavg{\textbf{49.82}} \\
\midrule
\configband{ConfigBandGreen}{Granularity Allocation} \\
\midrule
Fixed Group-128 & - & \underline{12.71} & \textbf{63.49} & 35.96 & \underline{51.96} & \underline{56.20} & \underline{49.84} & \configavg{\underline{51.49}} \\
Random Selection & - & 23313 & 43.57 & \textbf{48.64} & 3.10 & 50.62 & \textbf{51.15} & \configavg{39.41} \\
\cmidrule(){1-9}
Ours & 127 & \textbf{11.39} & \underline{62.54} & \underline{36.90} & \textbf{52.99} & \textbf{58.01} & 49.37 & \configavg{\textbf{51.96}} \\
\bottomrule
\end{tabular}
\vspace{-5mm}
\end{table}

\cref{tab:configuration-selection-results} summarizes the results.
For bit allocation, each linear layer chooses a bitwidth from the admissible low-bit set $[2, 3, 4]$ under a target average of $3.0$ bits, and we compare against AMQ \citep{lee2025amq} and HIGGS \citep{malinovskii2024linearity} allocation.
AMQ obtains the lowest WikiText2 PPL in this run, but its allocation time is much larger than that of the price-guided selector, since AMQ is an search-based method that builds and evaluates the quantized models for each candidate allocation.
Meanwhile, the price-guided allocation gives the best average downstream task score among the three 3-bit methods.
Compared with HIGGS, the price-guided allocation is faster and stronger on both PPL and average task accuracy.
These results do not imply that price is the universal best proxy for every metric, but they show that optimizing the predicted price can find a competitive allocation at substantially lower selection cost.

For transformation selection, the quantizer is held fixed while the selector chooses among admissible pre-quantization transformations, with CALM-CKA \citep{zhang2026calm} and random choice as baselines.
For granularity allocation, the selector chooses the scale at which weights share quantization parameters, with fixed group-128 and random choice as baselines.
In both settings, the price-guided choice gives the best PPL and the best average task score.
This indicates that pricing the induced layer-output error can select effective coordinates and granularities rather than treating these choices as backend details.

\section{Related Work and Scope}
\label{sec:related}
\label{sec:limitations}

The closest bit-allocation neighbors differ mainly in what has already been fixed before allocation begins.
HAWQ, HAWQ-V2, and HAWQ-V3 use Hessian-aware signals for mixed precision \citep{dong2019hawq,dong2019hawqv2,yao2021hawqv3}.
HIGGS-style methods relate layerwise reconstruction error to perplexity increase and allocate nonuniform quantization levels \citep{malinovskii2024linearity}.
Q-Palette takes a rate-distortion view of fractional-bit quantizers after Gaussianizing assumptions \citep{lee2025qpalette}.
These works are important fixed-geometry allocation references.
The present paper makes the geometry itself part of the pre-deployment configuration set and prices transformations, granularities, quantizer families, and bits with the same downstream metric.

Transformation and outlier methods occupy the other side of the same decision.
SmoothQuant, AWQ, OmniQuant, AffineQuant, FlatQuant, QuaRot, SpinQuant, QuIP, and QuIP\# show how scaling, rotations, learned affine transforms, incoherence, and outlier handling change quantization behavior \citep{xiao2023smoothquant,lin2024awq,shao2023omniquant,ma2024affinequant,sun2024flatquant,ashkboos2024quarot,liu2024spinquant,chee2023quip,tseng2024quipsharp}.
In this formulation, these transformations become comparable before deployment through the priced layer-output error they induce.
End-loss methods such as GuidedQuant and YAQA are closest in metric because they use downstream loss or output-distribution information \citep{guidedquant2025,tseng2025mpadaptive}.
This places the paper at the configuration-allocation level over admissible weight-space choices, with adaptive rounding objectives serving as close metric-aware neighbors.

The primary selector is deliberately scoped to weight-space PTQ.
Activation quantization, KV-cache quantization, runtime dynamic clipping, and QAT-style learned quantizers are inside the same price language only when their effect is exposed as a pre-deployment output-error covariance and a deployment cost.
The weight-activation extension, the AM-GM balance law inside the whitened isotropic surrogate, and the continuous allocation law are recorded formally in \cref{app:joint-quantization-extension} as branch consequences, not as the main selector's objective.
The local quadratic price is intended for regimes where low-bit nonlocality, moment-model mismatch, cross-layer coupling, and curvature-proxy error remain secondary to the predicted price gap.
Because the selector optimizes the price-only objective $\hPrice(\alpha)$, these effects define operating regimes for the predictor rather than extra optimization terms.

\section{Conclusion}

This paper formulates weight-space PTQ as pre-deployment configuration selection.
Before the completed quantized model reveals its output-distribution drift, each admissible layer choice must be ranked by the layer-output error it will induce and the deployment cost it will pay.
Starting from full-precision-to-quantized forward KL, the price theorem shows that score cancellation leaves a quadratic quantization price, so the reference model supplies the downstream curvature and each candidate supplies its error covariance.
This turns finite formats, granularities, codebooks, equivalent transformations, and fixed-geometry bit allocation into comparable instances of the same price-guided search, with the trace reduction providing a practical calibration-time score.
The resulting selector is most credible in the local, moment-stable, curvature-aligned regime tested by the evidence, and the same scope conditions identify where extending the price language beyond weight-space PTQ requires additional modeling.

\bibliographystyle{plainnat}
\bibliography{workspace/reference}

\appendix
\section{Extended Related Work}
\label{app:related}

\paragraph{Second-order and reconstruction-based PTQ.}
OBQ, GPTQ, and OBC-style methods use local second-order structure to compensate or order weight quantization decisions \citep{frantar2022gptq,frantar2023obc}.
QuIP and QuIP\# combine incoherence processing with quantization guarantees and codebook structure \citep{chee2023quip,tseng2024quipsharp}.
End-loss and model-preservation neighbors are the most direct novelty boundary because GuidedQuant uses end-loss guidance for quantization, while YAQA directly targets output-distribution preservation through end-to-end Hessian approximations and Kronecker sketches \citep{guidedquant2025,tseng2025mpadaptive}.
Analytical neighbors such as provable OPTQ, QERA, ASER, RUQuant, and activation-sensitivity analysis study local PTQ ingredients or reconstruction refinements \citep{zhang2025provableoptq,zhang2024qera,zhao2024aser,liu2026ruquant,xu2026activationsensitivity}.
The present paper uses these ingredients at a different decision level by predicting the price induced by candidate weight-space PTQ configurations and directly optimizing that price under deployment constraints.

\paragraph{Outliers, rotations, and affine transformations.}
LLM.int8(), Outlier Suppression, SmoothQuant, and AWQ show how calibration statistics and outlier ranges can motivate range-aware transformations \citep{dettmers2022llmint8,wei2022outlier,xiao2023smoothquant,lin2024awq}.
QuaRot, SpinQuant, AffineQuant, and FlatQuant motivate the transform-family view \citep{ashkboos2024quarot,liu2024spinquant,ma2024affinequant,sun2024flatquant}.
SpQR, SqueezeLLM, AQLM, GPTVQ, VPTQ, and QTIP expand the admissible weight perturbation set through sparse exceptions, dense-sparse decompositions, additive codebooks, vector quantization, or trellis-coded quantization \citep{dettmers2023spqr,kim2024squeezellm,egiazarian2024aqlm,vanbaalen2025gptvq,liu2024vptq,tseng2025qtip}.
In the pricing view used here, these methods matter only through the weight perturbation distribution they induce.
Activation quantization is a neighboring problem rather than part of the present model.

\paragraph{Mixed precision and deployment constraints.}
HAWQ, HAWQ-V2, and HAWQ-V3 use Hessian-aware signals for mixed-precision decisions \citep{dong2019hawq,dong2019hawqv2,yao2021hawqv3}.
SliM-LLM, Q-Palette, Linearity-Theorem-style allocation, PTQ1.61, QQQ, LRQ, Atom, KIVI, KVQuant, and QServe enlarge the deployment space through salience, fractional-bit formats, low-bit limits, vector or system-aware constraints, KV-cache compression, and serving co-design \citep{huang2024slimllm,lee2025qpalette,malinovskii2024linearity,zhao2025ptq161,zhang2024qqq,lee2025lrq,zhao2024atom,liu2024kivi,hooper2025kvquant,lin2025qserve}.
These methods sit downstream of the price because once candidate weight formats have predicted prices and deployment costs, hardware-aware deployment remains a discrete constrained selection problem.
The manuscript's allocation contribution is therefore the reduction from priced weight perturbations and transformation terms to coefficients such as $c_l$, not the claim that continuous KKT allocation is new.

\section{Proofs}
\label{app:additional-proofs-diagnostics}

\subsection{Proof of \texorpdfstring{\cref{thm:price-predictor}}{Thm.}}
\label{app:proof-price-predictor}
\begin{proof}
For each calibration input, the forward KL and the reference-supervised NLL differ by an entropy term independent of the candidate quantized model.
Write
\begin{equation}
    \Lref(\theta')\triangleq
    \E_{\vx\sim\Dcal}\left[
    -\sum_y\pizero(y\mid\vx)\log \pi_{\theta'}(y\mid\vx)\right]\ .
\end{equation}
At the full-precision reference,
\begin{equation}
    \nabla_{\theta'}\Lref(\theta')\big|_{\theta'=\theta}
    =
    -\E_{\vx\sim\Dcal}\left[\sum_y
    \pizero(y\mid\vx)\nabla_{\theta}\log\pizero(y\mid\vx)\right]
    =
    -\E_{\vx\sim\Dcal}\left[\sum_y\nabla_{\theta}\pizero(y\mid\vx)\right]=0 \ .
\end{equation}
Taylor expansion around the reference therefore has no first-order term.
For a linear layer represented in the candidate coordinate $\W_l(\alpha_l)\vx_l(\alpha_l)=\W_l\vx_l$, use the per-sample layer-output drift $\Jcal_{\vx,l}$ from the main text and let $\vw_l=\operatorname{vec}(\W_l(\alpha_l))$, $\vg_l=\left.\nabla_{\vz}\Jcal_{\vx,l}(\vz)\right|_{\vz=\vz_l}$, and $\Hz_l(\vx)=\left.\nabla_{\vz}^2\Jcal_{\vx,l}(\vz)\right|_{\vz=\vz_l}$.
The second-order chain rule gives
\begin{equation}
    \nabla_{\vw_l}^2\Jcal_{\vx,l}(\W_l(\alpha_l)\vx_l(\alpha_l))
    =
    \underbrace{(\I\otimes \vx_l(\alpha_l))\Hz_l(\vx)(\I\otimes \vx_l(\alpha_l))\tp}_{\ntxt{Gauss-Newton term}}
    +
    \underbrace{\sum_i [\vg_l]_i\nabla_{\vw_l}^2[\vz_l]_i}_{=0}\ ,
\end{equation}
because each coordinate of $\vz_l$ is linear in $\vw_l$.
Hence the retained block is $\Hz_l(\vx)\otimes \vx_l(\alpha_l)\vx_l(\alpha_l)\tp$ up to vectorization convention.
For $\vdelta_l=\operatorname{vec}(\Dmat_l(\alpha_l))$, the quadratic contribution is
\begin{equation}
    \frac12\E_{\vx\sim\Dcal}\left[
    \Tr\left(
    \Hz_l(\vx)
    \Dmat_l(\alpha_l)\vx_l(\alpha_l)\vx_l(\alpha_l)\tp
    \Dmat_l(\alpha_l)\tp
    \right)\right]\ .
\end{equation}
Replacing $\Hz_l(\vx)$ by the retained average $\Hz_l$ yields
\begin{equation}
    \frac12\Tr\left(
    \Hz_l
    \Dmat_l(\alpha_l)\Cxl(\alpha_l)\Dmat_l(\alpha_l)\tp
    \right)
    =
    \frac12\Tr\left(\Hz_l\Cerr_l(\alpha_l)\right)\ .
\end{equation}
Let $\Hz_{lk}$ be the exact Hessian block of $\Jcal$ with respect to vectorized layer weights at the reference, and let $\tilde{\Hz}_{ll}$ be the retained block satisfying $\frac12\vdelta_l\tp\tilde{\Hz}_{ll}\vdelta_l=\Price_l(\alpha_l)$.
For the stacked perturbation $\vdelta=(\vdelta_1,\ldots,\vdelta_L)$, Taylor's theorem gives
\begin{equation}
    \Jcal(\alpha)
    =
    \frac12\sum_l\vdelta_l\tp\Hz_{ll}\vdelta_l
    +
    \frac12\sum_{l\ne k}\vdelta_l\tp\Hz_{lk}\vdelta_k
    +
    R_3(\alpha)\ ,
    \qquad
    \abs{R_3(\alpha)}\le\frac{M(\alpha)}{6}\norm{\vdelta}_2^3 \ .
\end{equation}
The retained block-diagonal quadratic part is therefore
\begin{equation}
    \Price(\alpha)
    =
    \frac12\sum_l\vdelta_l\tp\tilde{\Hz}_{ll}\vdelta_l
    =
    \sum_l\frac12\Tr\left(\Hz_l\Cerr_l(\alpha_l)\right)\ ,
\end{equation}
which is \eqref{eq:realized-configuration-price}.
The omitted diagnostic term is
\begin{equation}
    r(\alpha)
    =
    \Jcal(\alpha)-\Price(\alpha)
    =
    \frac12\sum_l
    \vdelta_l\tp(\Hz_{ll}-\tilde{\Hz}_{ll})\vdelta_l
    +
    \frac12\sum_{l\ne k}\vdelta_l\tp\Hz_{lk}\vdelta_k
    +
    R_3(\alpha)\ .
\end{equation}
Equivalently, let $\Hz_{\theta}$ be the exact block Hessian with blocks $\Hz_{lk}$ and let $\tilde{\Hz}_{\theta}$ be block diagonal with blocks $\tilde{\Hz}_{ll}$.
Then $\Price(\alpha)=\frac12\vdelta\tp\tilde{\Hz}_{\theta}\vdelta$, so
\begin{equation}
    r(\alpha)
    =
    \frac12\vdelta\tp(\Hz_{\theta}-\tilde{\Hz}_{\theta})\vdelta
    +
    R_3(\alpha)\ .
\end{equation}
Cauchy's inequality and the Taylor remainder bound give the compact relative diagnostic used in the main text:
\begin{equation}
    \abs{r(\alpha)}
    \le
    \frac12\norm{\Hz_{\theta}-\tilde{\Hz}_{\theta}}_2\norm{\vdelta}_2^2
    +
    \frac{M(\alpha)}{6}\norm{\vdelta}_2^3 \ .
\end{equation}
When $\Price(\alpha)>0$, dividing this display by $\Price(\alpha)$ gives the relative diagnostic $\eta(\alpha)$ in \eqref{eq:price-remainder-bound}.
This proves the main-text price expression and identifies the proof-level terms omitted when the selector uses the quadratic price as its objective.
\end{proof}

\subsection{Finite-Format Moment Bound}
\label{app:proof-finite-format-price-bound}
\begin{proof}
Write $\Dmat=\Dmat_l(\alpha_l)$ and $\rmC=\Cxl(\alpha_l)$.
Let $\vh_l$ and $\vc_l$ collect the diagonals of $\hHz_l$ and $\rmC$, and let $\rmV_l(\alpha_l)$ be any nonnegative matrix satisfying $\E[\Dmat_{ij}^2]\le[\rmV_l(\alpha_l)]_{ij}$.
Expanding the covariance entrywise gives
\begin{equation}
    \E[(\Dmat\rmC\Dmat\tp)_{ii'}]
    =
    \sum_{j,k}[\rmC]_{jk}\E[\Dmat_{ij}\Dmat_{i'k}] \ .
\end{equation}
The independent zero-mean moment model removes all terms except $(i,j)=(i',k)$, hence
\begin{equation}
    \E[\hPrice_l(\alpha_l)]
    =
    \frac12\sum_i[\hHz_l]_{ii}\sum_j[\Cxl(\alpha_l)]_{jj}
    \E[\Dmat_{ij}^2]
    \le
    \frac12\vh_l\tp\rmV_l(\alpha_l)\vc_l\ .
\end{equation}
For the grouped scalar rounding model, $[\rmV_l(\alpha_l)]_{ij}=(\delta_{g(i,j)}(\alpha_l))^2/12$, which recovers the displayed bit and granularity bound in the main text.
\end{proof}

\subsection{Equivalent Transformation Covariance}
\label{app:proof-configuration-invariance}
\begin{proof}
Substituting $\tx=\Pmat\vx$ and $\tW=\W\Pmat^{-1}$ gives $\tW\tx=\W\vx$, so the floating-point output and its output-side curvature are unchanged.
Quantizing $\tW$ by $\tDelta_{\tW}$ creates output perturbation $\tDelta_{\tW}\tx$, whose covariance is $\tDelta_{\tW}\E[\tx\tx\tp]\tDelta_{\tW}\tp=\tDelta_{\tW}(\Pmat\Cx\Pmat\tp)\tDelta_{\tW}\tp$.
Vectorizing the quadratic form gives the Kronecker factor $\Hz\otimes(\Pmat\Cx\Pmat\tp)$, making explicit that the transformation moves the input-side metric and perturbation while leaving the output-side price metric fixed.
\end{proof}

\subsection{Proof of the Affine Whitening Optimum}
\label{app:proof-optimal-affine-input-metric}
\label{app:proof-whitened-trace-score}
\begin{proof}
Let $\Cx=\rmU\Lambda\rmU\tp$.
If $\Pmat=\sigma\Qmat\Lambda^{-1/2}\rmU\tp$ with $\Qmat\in\Orth(n)$, then
\begin{equation}
    \Pmat\Cx\Pmat\tp
    =
    \sigma^2\Qmat\Lambda^{-1/2}\Lambda\Lambda^{-1/2}\Qmat\tp
    =
    \sigma^2\I \ .
\end{equation}
Conversely, if $\Pmat\Cx\Pmat\tp=\sigma^2\I$, then $\Qmat=\sigma^{-1}\Pmat\rmU\Lambda^{1/2}$ satisfies $\Qmat\Qmat\tp=\I$, so $\Pmat=\sigma\Qmat\Lambda^{-1/2}\rmU\tp$.
Substituting the whitened covariance into the layer price gives
\begin{equation}
    \Price_{\Pmat}=\frac12\Tr(\Hz\tDelta_{\tW}(\sigma^2\I)\tDelta_{\tW}\tp)
    =
    \frac{\sigma^2}{2}\Tr(\Hz\tDelta_{\tW}\tDelta_{\tW}\tp)\ .
\end{equation}
If $|\det\Pmat|=1$, then $\det(\Pmat\Cx\Pmat\tp)=\det\Cx$.
For any positive definite matrix $\rmM$, $\lambda_{\max}(\rmM)\ge(\det\rmM)^{1/n}$ and $\kappa(\rmM)\ge1$.
Applying this to $\rmM=\Pmat\Cx\Pmat\tp$ gives the lower bounds.
Determinant-preserving whitening sets $\rmM=(\det\Cx)^{1/n}\I$, so it attains both lower bounds.
Equality in the $\lambda_{\max}$ lower bound holds only when all eigenvalues of $\rmM$ are equal, hence exactly when $\Pmat\Cx\Pmat\tp=(\det\Cx)^{1/n}\I$.
The preceding converse gives the displayed form of every optimizer.
Taking expectation under $\E[\tDelta_{\tW}\tDelta_{\tW}\tp]=\alpha\I+\rmA$ yields the stated decomposition with $\eps_{\ntxt{ani}}=(\sigma^2/2)\Tr(\Hz\rmA)$.
Holder's trace inequality gives $|\Tr(\Hz\rmA)|\le\norm{\Hz}_2\norm{\rmA}_*$.
\end{proof}

\subsection{Whitening Normalizations and Scale Coupling}
\label{app:whitening-normalizations}
The whitening family in \Cref{app:proof-whitened-trace-score} leaves a scalar degree of freedom.
Three standard normalizations fix it.
Let $\Cx=\rmU\Lambda\rmU\tp$ with eigenvalues $\lambda_1,\ldots,\lambda_n>0$, and set $\Pmat(\sigma,\Qmat)=\sigma\Qmat\Lambda^{-1/2}\rmU\tp$ with $\Qmat\in\Orth(n)$.
Then $\Pmat\Cx\Pmat\tp=\sigma^2\I$, and
\begin{equation}
    \sigma_{\ntxt{vol}}^2
    =
    \left(\prod_{i=1}^n\lambda_i\right)^{1/n}\ ,
    \qquad
    \sigma_{\ntxt{tr}}^2
    =
    \frac1n\sum_{i=1}^n\lambda_i\ ,
    \qquad
    \sigma_{\ntxt{F}}^2
    =
    \frac{n}{\sum_{i=1}^n\lambda_i^{-1}} \ .
\end{equation}
These are the geometric, arithmetic, and harmonic means induced by the constraints $|\det\Pmat|=1$, $\Tr(\Pmat\Cx\Pmat\tp)=\Tr(\Cx)$, and $\norm{\Pmat}_F^2=n$, respectively.
Under any of the three normalizations, the whitened weight-only price reduces to
\begin{equation}
    \Price_{\Pmat}
    =
    \frac{\sigma^2}{2}
    \Tr(\Hz\tDelta_{\tW}\tDelta_{\tW}\tp)\ .
\end{equation}
If the transformed-weight perturbation is isotropic in output space, $\E[\tDelta_{\tW}\tDelta_{\tW}\tp]=\alpha\I+\rmA$, this becomes
\begin{equation}
    \E[\Price_{\Pmat}]
    =
    \frac{\sigma^2\alpha}{2}\Tr(\Hz)
    +
    \frac{\sigma^2}{2}\Tr(\Hz\rmA)\ .
\end{equation}
The scalar $\sigma^2$ is therefore part of the candidate-side coefficient in the implemented statistic, not a replacement for output curvature.

A useful caveat is that, for adaptive scalar quantizers, the transformed-weight distortion can scale with the whitening normalization.
For a per-tensor uniform quantizer with range $R_q$ and $b$ bits,
\begin{equation}
    \E\left[\norm{\tDelta_{\tW}}_F^2\right]
    \approx
    mn\,\frac{R_q^2}{3\cdot 4^{b-1}}\ .
\end{equation}
If the range follows a homogeneous statistic of $\tW=\W\Pmat^{-1}=\sigma^{-1}\W\rmU\Lambda^{1/2}\Qmat\tp$, then $R_q$ can scale as $1/\sigma$, causing the explicit $\sigma^2$ in the input metric and the implicit $1/\sigma^2$ in distortion to partially cancel.
This is a quantizer-model-dependent diagnostic rather than a universal simplification.
The implemented selector computes $\hat\omega_l$ from the realized perturbation product or from the chosen finite-format moment model.

\subsection{Trace-Score Reduction}
\label{app:proof-trace-reduction}
\begin{proof}
Under the scalar input-metric condition $\Cxl(\alpha_l)=\sigma_l^2(\alpha_l)\I$, the layer-output error covariance induced by the candidate moment is
\begin{equation}
    \Cerr_l(\alpha_l)
    =
    \E\left[
    \Dmat_l(\alpha_l)\Cxl(\alpha_l)\Dmat_l(\alpha_l)\tp
    \right]
    =
    \sigma_l^2(\alpha_l)\Rerr_l(\alpha_l)\ .
\end{equation}
Substituting $\Rerr_l(\alpha_l)=\omega_l(\alpha_l)\I+\rmA_l(\alpha_l)$ into the layer price gives
\begin{equation}
    \Price_l(\alpha_l)
    =
    \frac12\sigma_l^2(\alpha_l)
    \Tr\left(\Hz_l(\omega_l(\alpha_l)\I+\rmA_l(\alpha_l))\right)\ ,
\end{equation}
which is \eqref{eq:trace-reduction} by linearity of the trace.
The second term is the residual left by non-isotropic candidate noise.
When a bound is needed, trace duality gives
\begin{equation}
    \left|
    \frac12\sigma_l^2(\alpha_l)
    \Tr\left(\Hz_l\rmA_l(\alpha_l)\right)
    \right|
    \le
    \frac12\sigma_l^2(\alpha_l)
    \norm{\Hz_l}_2\norm{\rmA_l(\alpha_l)}_* \ .
\end{equation}
\end{proof}

\subsection{Diagonal Scaling Shows What Scaling Cannot Change}
\label{app:diagonal-correlation}
For $\Smat=\diag(s_1,\ldots,s_n)$, define the correlation matrix of $\Cx$ by
\begin{equation}
    [\rmR_{\vx}]_{jk}=\frac{[\Cx]_{jk}}{\sqrt{[\Cx]_{jj}[\Cx]_{kk}}}\ .
\end{equation}
The transformed covariance is $\Smat\Cx \Smat$, and its correlation matrix has entries
\begin{equation}
    \frac{s_js_k[\Cx]_{jk}}{\sqrt{s_j^2[\Cx]_{jj}\,s_k^2[\Cx]_{kk}}}=[\rmR_{\vx}]_{jk}\ .
\end{equation}
Thus diagonal scaling can change variances but not standardized correlations.
This is why it cannot remove off-diagonal dependence in $\Cx$.

\subsection{Rotation Coherence Bounds Scalar-Quantizer Distortion}
\label{app:rotation-coherence}
The main text uses rotations only as a distortion-changing configuration.
A simple bound makes that role explicit.
For a symmetric per-tensor uniform quantizer with $\Levels_b=2^b-1$, define the coherence of a nonzero matrix $\rmA\in\R^{m\times n}$ by
\begin{equation}
    \mu(\rmA)=\frac{mn\norm{\rmA}_\infty^2}{\norm{\rmA}_F^2}\ .
\end{equation}
If $\tW=\W\rmR\tp$ and scalar rounding produces an error $\tDelta_{\tW}$, then the standard range bound gives
\begin{equation}
    \norm{\tDelta_{\tW}}_F^2
    \le
    \frac{\mu(\W\rmR\tp)\norm{\W}_F^2}{\Levels_b^2}\ .
\end{equation}
Consequently,
\begin{equation}
\label{eq:rotation-bound}
    \Price_{\rmR}(\tDelta_{\tW})
    \le
    \frac12\norm{\Hz}_2\lambda_{\max}(\Cx)
    \frac{\mu(\W\rmR\tp)\norm{\W}_F^2}{\Levels_b^2}\ .
\end{equation}
The input covariance spectrum is unchanged by $\rmR$.
The bound can improve only through the scalar-quantizer distortion factor.

\subsection{Curvature and Covariance Structure Diagnostics}
\label{app:diagnostic-structures}
The supporting diagnostics record three structural observations that determine when the scalarized trace statistic is appropriate.
First, output curvature $\Hz_l$ can be diagonal-dominant without being close to a scalar identity.
Writing
\begin{equation}
    \Hz_l=\diag(\Hz_l)+\rmE_l\ ,
\end{equation}
the diagonal dominance of $\Hz_l$ controls the size of $\rmE_l$, but the entries of $\diag(\Hz_l)$ can still vary substantially.
Replacing $\Hz_l$ by $\I$ therefore removes output-direction prices.
The trace reduction in \Cref{eq:trace-reduction} instead uses $\Tr(\Hz_l)$ only after the candidate-side output-noise covariance has been reduced to an isotropic matrix.

Second, raw input covariance $\Cxl$ can have substantial off-diagonal mass.
The diagonal-input reduction
\begin{equation}
    \frac12\Tr(\Hz\Dmat_{\W}\diag(\Cx)\Dmat_{\W}\tp)
\end{equation}
is a proxy, not the full price.
The diagnostics measure whether $\Cx-\diag(\Cx)$ behaves like secondary noise for the tested quadratic form.
When it does not, the full covariance or a transformation-generated covariance $\Pmat\Cx\Pmat\tp$ must be priced.

Third, the imatrix or weighted-quantization proxy used in some deployment stacks is exactly the reduced case obtained by setting $\Hz=\I$ and $\Cx=\diag(v_1,\ldots,v_n)$:
\begin{equation}
    \Price_{\ntxt{imatrix}}(\Dmat_{\W})
    =
    \frac12\sum_{j=1}^n v_j\norm{[\Dmat_{\W}]_{:,j}}_2^2\ ,
    \qquad
    v_j=\E[[\vx]_j^2]\ .
\end{equation}
Thus imatrix-style weights preserve diagonal input importance but omit both output curvature and cross-channel input covariance.
This is why the main text treats them as reduced proxies after the full price has been defined.

\subsection{Weight-Activation Extension and Reduced Allocation Laws}
\label{app:joint-quantization-extension}
The main selector is weight-space, but the supporting analysis also contains a neighboring weight-activation branch.
This branch uses the same output-price operator and is recorded here so that the branch coverage is explicit without turning the main text into a joint-quantization paper.

Fix a layer $\vz=\W\vx$.
Let the weight perturbation be $\Dmat_{\W}$ and write the activation perturbation as $\boldsymbol{\epsilon}_{\vx}=\hat{\vx}-\vx$.
Ignoring the product $\Dmat_{\W}\boldsymbol{\epsilon}_{\vx}$, the output perturbation is
\begin{equation}
    \boldsymbol{\epsilon}_{\vz}
    \approx
    \Dmat_{\W}\vx+\W\boldsymbol{\epsilon}_{\vx}\ .
\end{equation}
The weight-side first-order term is structurally zero at the reference by the same score-cancellation identity used in \Cref{thm:price-predictor}.
The activation-side first-order term is different because it vanishes only under a noise model such as $\E[\boldsymbol{\epsilon}_{\vx}\mid \vx]\approx 0$.
Under this conditional zero-mean assumption, and when $\E[\vx\boldsymbol{\epsilon}_{\vx}\tp]\approx0$, the retained output-error covariance becomes
\begin{equation}
    \Nout_{\ntxt{WA}}
    =
    \Dmat_{\W}\Cx\Dmat_{\W}\tp
    +
    \W\rmC_{\boldsymbol{\epsilon}}\W\tp\ ,
    \qquad
    \rmC_{\boldsymbol{\epsilon}}
    \triangleq
    \E[\boldsymbol{\epsilon}_{\vx}\boldsymbol{\epsilon}_{\vx}\tp]\ ,
\end{equation}
and the corresponding joint price is
\begin{equation}
    \Price_{\ntxt{WA}}
    =
    \frac12\Tr(\Hz\Nout_{\ntxt{WA}})\ .
\end{equation}
If the conditional zero-mean assumption fails, the omitted first-order bias and the cross moment are not hidden in the selector.
They are validity diagnostics for activation quantization in low-bit or strongly clipped regimes.

Under an equivalent input transform $\tx=\Pmat\vx$, $\tW=\W\Pmat^{-1}$, the two retained terms become
\begin{equation}
    \Price_W(\Pmat)
    =
    \frac12\Tr(\Hz\tDelta_{\tW}(\Pmat\Cx\Pmat\tp)\tDelta_{\tW}\tp)\ ,
    \qquad
    \Price_A(\Pmat)
    =
    \frac12\Tr(\Hz\W\Pmat^{-1}\rmC_{\boldsymbol{\epsilon},\tx}\Pmat^{-\top}\W\tp)\ .
\end{equation}
Thus preprocessing has a dual effect in the joint branch.
It can make the weight-side input metric more isotropic, while simultaneously changing the activation-noise covariance and the propagation matrix for activation noise.
For scale-equivariant per-channel activation quantization, this activation-side effect can cancel in the simplified surrogate.
For shared-range or non-equivariant activation quantization, scaling can change activation price.

Inside the whitened isotropic surrogate, let $\Pmat\Cx\Pmat\tp=\sigma^2\I$, $\E[\tDelta_{\tW}\tDelta_{\tW}\tp]\approx\alpha\I$, and $\rmC_{\boldsymbol{\epsilon},\tx}\approx\beta\I$.
Since $\Pmat^{-1}\Pmat^{-\top}=\sigma^{-2}\Cx$ for the whitening family,
\begin{equation}
    \Price_{\ntxt{WA}}(\sigma^2)
    =
    \frac{\alpha\sigma^2}{2}\Tr(\Hz)
    +
    \frac{\beta}{2\sigma^2}\Gamma\ ,
    \qquad
    \Gamma\triangleq\Tr(\Hz\W\Cx\W\tp)\ .
\end{equation}
This scalar branch has the exact AM-GM minimizer
\begin{equation}
    \sigma_\star^2
    =
    \left(\frac{\beta\Gamma}{\alpha\Tr(\Hz)}\right)^{1/2}\ ,
    \qquad
    \Price_{\ntxt{WA}}(\sigma_\star^2)
    =
    \sqrt{\alpha\beta\Gamma\Tr(\Hz)}\ .
\end{equation}
The equality is exact only after the whitening, isotropic-noise, and conditional-zero-mean reductions have been imposed.

The continuous bit-allocation law is another reduced surrogate.
If a layer has retained scalar coefficient $c_l>0$ and continuous bit variable $b_l$ with reduced price $c_l2^{-b_l}$, then under the interior constraint $\sum_l b_l=B$,
\begin{equation}
    b_l^\star
    =
    \bar b+\log_2\frac{c_l}{\cgeom}\ ,
    \qquad
    \bar b=\frac{B}{L}\ ,
    \qquad
    \cgeom=\left(\prod_{l=1}^Lc_l\right)^{1/L}\ .
\end{equation}
The optimized retained prices are equal:
\begin{equation}
    c_l2^{-b_l^\star}=\cgeom2^{-\bar b}
    \quad\text{for all }l\ .
\end{equation}
For discrete hardware bitsets, unequal costs, KV-cache constraints, or active lower/upper bounds, this law is a warm start or diagnostic for the budgeted configuration search, not a replacement for the discrete selector in \eqref{eq:configuration-selection}.

\section{Price-Guided Configuration Selection on 3B and 8B Models}
\label{app:additional-configuration-selection}

The main experiment in \cref{sec:exp-configuration-selection} uses Llama-3.2-1B to test whether the calibration-time price can guide a discrete PTQ decision before deployment.
We use the same protocol on Llama-3.2-3B and Llama-3.1-8B to check whether the same selection rule behaves similarly as the model size changes.
The three blocks keep the same interpretation as in the main text.
Bit allocation compares with HIGGS and AMQ after fixing the GPTQ backend, transformation selection compares with CALM-CKA and random selection, and granularity allocation compares with fixed group-128 and random selection.

\begin{table}[htbp]
\centering
\caption{Price-guided configuration selection results on Llama-3.2-3B.
The FP16 row is the unquantized reference.
PPL is evaluated on WikiText2.
Selection time reports the core configuration decision procedure, which excludes the subsequent quantization and evaluation pass.
The best and second-best results are highlighted in \textbf{bold} and \underline{underline}, respectively.}
\label{tab:configuration-selection-3b-results}
\footnotesize
\setlength{\tabcolsep}{4.8pt}
\renewcommand{\arraystretch}{1.08}
\definecolor{ConfigBandGold}{RGB}{247,232,205}
\definecolor{ConfigBandBlue}{RGB}{221,230,250}
\definecolor{ConfigBandGreen}{RGB}{218,237,222}
\definecolor{ConfigAvg}{RGB}{255,249,207}
\newcommand{\configavg}[1]{\begingroup\setlength{\fboxsep}{1.2pt}\colorbox{ConfigAvg}{\strut\makebox[3.0em][c]{#1}}\endgroup}
\newcommand{\configband}[2]{\multicolumn{9}{@{}c@{}}{\begingroup\setlength{\fboxsep}{3pt}\colorbox{#1}{\makebox[\dimexpr\linewidth-2\fboxsep][c]{\strut\textbf{\small{#2}}}}\endgroup}}
\newcommand{\confighead}[2]{\begin{tabular}[c]{@{}c@{}}\textbf{#1}\\\textbf{#2}\end{tabular}}
\newcommand{\configpm}[2]{\begin{tabular}[c]{@{}c@{}}#1\\[-1pt]$\pm$ #2\end{tabular}}
\begin{tabular}{@{}lcccccccc@{}}
\toprule
\textbf{Method} &
\textbf{Time (s) $\downarrow$} &
\textbf{PPL $\downarrow$} &
\textbf{BoolQ $\uparrow$} &
\textbf{TruthfulQA $\uparrow$} &
\textbf{PIQA $\uparrow$} &
\textbf{WinoGrande $\uparrow$} &
\textbf{WiC $\uparrow$} &
\configavg{\textbf{Avg. $\uparrow$}} \\
\midrule
FP16 & - & 7.81 & 74.04 & 39.28 & 72.25 & 68.43 & 50.31 & \configavg{60.86} \\
\midrule
\configband{ConfigBandGold}{Bit Allocation} \\
\midrule
HIGGS \citep{malinovskii2024linearity} & \underline{3417} & 13.18 & 51.10 & \textbf{40.39} & 18.55 & 55.64 & \underline{49.69} & \configavg{43.08} \\
AMQ \citep{lee2025amq} & 16265 & \textbf{9.41} & \underline{57.03} & 36.58 & \textbf{60.34} & \textbf{63.85} & 48.75 & \configavg{\underline{53.31}} \\
\cmidrule(){1-9}
Ours & \textbf{2182} & \underline{11.23} & \textbf{61.99} & \underline{39.10} & \underline{54.52} & \underline{60.54} & \textbf{52.04} & \configavg{\textbf{53.64}} \\
\midrule
\configband{ConfigBandBlue}{Transformation Selection} \\
\midrule
CALM-CKA \citep{zhang2026calm} & \textbf{970} & \underline{8.81} & \underline{71.90} & \textbf{40.12} & \underline{67.79} & \underline{64.48} & \textbf{48.59} & \configavg{\underline{58.58}} \\
Random Selection & - & 10.68 & 67.45 & 38.95 & 63.98 & 63.27 & \underline{48.54} & \configavg{56.44} \\
\cmidrule(){1-9}
Ours & 1004 & \textbf{8.58} & \textbf{72.26} & \underline{39.75} & \textbf{70.18} & \textbf{65.43} & 47.18 & \configavg{\textbf{58.96}} \\
\midrule
\configband{ConfigBandGreen}{Granularity Allocation} \\
\midrule
Fixed Group-128 & - & \underline{8.81} & \underline{71.62} & \underline{39.99} & \underline{68.17} & \underline{65.59} & \underline{46.55} & \configavg{\textbf{58.38}} \\
Random Selection & - & 2504.78 & 44.33 & \textbf{47.33} & 6.93 & 50.38 & \textbf{49.27} & \configavg{39.65} \\
\cmidrule(){1-9}
Ours & 372 & \textbf{8.46} & \textbf{72.11} & 37.98 & \textbf{68.50} & \textbf{65.67} & 46.39 & \configavg{\underline{58.13}} \\
\bottomrule
\end{tabular}
\vspace{-5mm}
\end{table}

\begin{table}[htbp]
\centering
\caption{Price-guided configuration selection results on Llama-3.1-8B.
The FP16 row is the unquantized reference.
PPL is evaluated on WikiText2.
Selection time reports the core configuration decision procedure, which excludes the subsequent quantization and evaluation pass.
The best and second-best results are highlighted in \textbf{bold} and \underline{underline}, respectively.}
\label{tab:configuration-selection-8b-results}
\footnotesize
\setlength{\tabcolsep}{4.8pt}
\renewcommand{\arraystretch}{1.08}
\definecolor{ConfigBandGold}{RGB}{247,232,205}
\definecolor{ConfigBandBlue}{RGB}{221,230,250}
\definecolor{ConfigBandGreen}{RGB}{218,237,222}
\definecolor{ConfigAvg}{RGB}{255,249,207}
\newcommand{\configavg}[1]{\begingroup\setlength{\fboxsep}{1.2pt}\colorbox{ConfigAvg}{\strut\makebox[3.0em][c]{#1}}\endgroup}
\newcommand{\configband}[2]{\multicolumn{9}{@{}c@{}}{\begingroup\setlength{\fboxsep}{3pt}\colorbox{#1}{\makebox[\dimexpr\linewidth-2\fboxsep][c]{\strut\textbf{\small{#2}}}}\endgroup}}
\newcommand{\confighead}[2]{\begin{tabular}[c]{@{}c@{}}\textbf{#1}\\\textbf{#2}\end{tabular}}
\newcommand{\configpm}[2]{\begin{tabular}[c]{@{}c@{}}#1\\[-1pt]$\pm$ #2\end{tabular}}
\begin{tabular}{@{}lcccccccc@{}}
\toprule
\textbf{Method} &
\textbf{Time (s) $\downarrow$} &
\textbf{PPL $\downarrow$} &
\textbf{BoolQ $\uparrow$} &
\textbf{TruthfulQA $\uparrow$} &
\textbf{PIQA $\uparrow$} &
\textbf{WinoGrande $\uparrow$} &
\textbf{WiC $\uparrow$} &
\configavg{\textbf{Avg. $\uparrow$}} \\
\midrule
FP16 & - & 6.24 & 83.00 & 44.17 & 77.09 & 71.35 & 51.10 & \configavg{65.34} \\
\midrule
\configband{ConfigBandGold}{Bit Allocation} \\
\midrule
HIGGS \citep{malinovskii2024linearity} & \textbf{5805} & 11.15 & 58.99 & \underline{39.56} & 28.40 & 58.72 & 49.69 & \configavg{47.07} \\
AMQ \citep{lee2025amq} & 23337 & \textbf{9.31} & \textbf{63.73} & \textbf{41.70} & \textbf{55.88} & \textbf{62.90} & \textbf{50.00} & \configavg{\textbf{54.84}} \\
\cmidrule(){1-9}
Ours & \underline{5967} & \underline{10.87} & \underline{60.83} & 38.74 & \underline{36.67} & \underline{60.46} & \textbf{50.00} & \configavg{\underline{49.34}} \\
\midrule
\configband{ConfigBandBlue}{Transformation Selection} \\
\midrule
CALM-CKA \citep{zhang2026calm} & \textbf{1957} & \underline{6.97} & \textbf{81.38} & \underline{42.19} & \underline{76.61} & \textbf{70.72} & \underline{50.00} & \configavg{\underline{64.18}} \\
Random Selection & - & 24.34 & 65.06 & \textbf{42.66} & 34.69 & 61.19 & 48.64 & \configavg{50.45} \\
\cmidrule(){1-9}
Ours & 1988 & \textbf{6.86} & \underline{80.43} & 41.80 & \textbf{77.48} & \underline{70.56} & \textbf{50.94} & \configavg{\textbf{64.24}} \\
\midrule
\configband{ConfigBandGreen}{Granularity Allocation} \\
\midrule
Fixed Group-128 & - & \underline{7.02} & \underline{80.89} & 40.39 & \underline{71.76} & \underline{70.64} & \textbf{50.00} & \configavg{\underline{62.74}} \\
Random Selection & - & 2326.04 & 45.28 & \textbf{48.73} & 8.94 & 50.59 & 49.32 & \configavg{40.57} \\
\cmidrule(){1-9}
Ours & 675 & \textbf{6.75} & \textbf{81.47} & \underline{41.28} & \textbf{74.97} & \textbf{71.11} & \textbf{50.00} & \configavg{\textbf{63.77}} \\
\bottomrule
\end{tabular}
\vspace{-5mm}
\end{table}

\Cref{tab:configuration-selection-3b-results,tab:configuration-selection-8b-results} show a consistent pattern for transformation and granularity choices, where the selected configuration changes how the candidate perturbation induces layer-output error.
In transformation selection, the price-guided selector gives the best PPL and the best average task score on both models.
On Llama-3.2-3B it improves over CALM-CKA from 8.81 to 8.58 PPL and from 58.58 to 58.96 average score.
On Llama-3.1-8B it similarly improves PPL from 6.97 to 6.86 and gives a slightly higher average score, 64.24 versus 64.18, while maintaining the comparable selection time cost of CALM-CKA.
For granularity allocation, the price-guided choice gives the best PPL on both models and the best average score on Llama-3.1-8B.
The random baselines are much less reliable in these two blocks, especially in PPL, so the price-guided rule is not just avoiding a weak fixed default.

The bit-allocation rows are less uniform.
On Llama-3.2-3B, the price-guided allocation has the best average downstream score, 53.64, while AMQ has the best PPL.
On Llama-3.1-8B, AMQ is stronger across this block, and the price-guided row is second in average score among the three allocation methods.
However, it is worth noting that the AMQ method spend much more time than our price-guided selector, since AMQ is an search-based method that builds and evaluates the quantized models for each candidate allocation. 

\FloatBarrier

\section{Experiments Setup}
\label{app:experiment-setup}

\subsection{Baselines}
Here we describe the baselines used in the price-guided configuration selection experiments in \cref{sec:exp-configuration-selection} and \cref{app:additional-configuration-selection}.

\paragraph{Bit Allocation.}
\begin{itemize}[leftmargin=*]
    \item \textbf{HIGGS} \citep{malinovskii2024linearity}: HIGGS is a mixed-precision method based on Linearity Theorem. It links per-layer $\ell_2$ reconstruction error to perplexity increase through layerwise linear coefficients, and uses the resulting linearized objective to choose nonuniform layer bitwidths under the same compression budget.
    \item \textbf{AMQ} \citep{lee2025amq}: AMQ is a search-based mixed-precision weight-only quantization baseline. It formulates layerwise bit allocation as a discrete multi objective search over bitwidth assignments under a memory budget, then makes the search practical with an iterative NSGA-II method before deploying the selected allocation with the final quantization backend. It is worth noting that search-based methods like AMQ can cost much more time than our price-guided selector.
\end{itemize}

\paragraph{Transformation Selection.}
\begin{itemize}[leftmargin=*]
    \item \textbf{CALM-CKA} \citep{zhang2026calm}: CALM-CKA is a representation similarity baseline adapted from CALM, which evaluates candidate PTQ modules layer by layer and selects the option whose quantized representation best preserves the full-precision representation under linear CKA. In our transformation-selection experiment, we keep the quantizer, bitwidth, and backend fixed and use the same CKA score to choose among admissible pre-quantization transformations.
    \item \textbf{Random Choice}: We randomly sample the admissible transformation candidate from the same candidate set used by the price-guided selector and then quantize.
\end{itemize}

\paragraph{Granularity Selection.}
\begin{itemize}[leftmargin=*]
    \item \textbf{Fixed Group-128}: We use a fixed group size of $128$ for all quantized linear layers, keeping the quantizer and bitwidth fixed so that the comparison isolates the value of layerwise granularity selection.
    \item \textbf{Random Choice}: Same as in transformation selection, we randomly sample the admissible granularity candidate from the same candidate set used by the price-guided selector and then quantize.
\end{itemize}

\subsection{Hyperparameters and Implementation Details}

\paragraph{Common setup.}
The KL-alignment study in \cref{fig:realized-price-kl} uses OPT-125M and Qwen3-0.6B.
The configuration-selection experiments use Llama-3.2-1B in the main text and Llama-3.2-3B and Llama-3.1-8B in \cref{app:additional-configuration-selection}.
For the Llama models, we quantize all transformer linear projections except embeddings, normalization layers, rotary embeddings, and the language-model head, giving $112$, $196$, and $224$ quantized linear layers for the 1B, 3B, and 8B models.
Calibration statistics are collected on the WikiText2 validation split with $32$ samples, maximum sequence length $512$, and seed $0$.
The KL-alignment run uses the same dataset family with $128$ validation samples of length $1024$.

\paragraph{Price estimation and bit allocation.}
For Llama selection, the downstream curvature term is the Gauss--Newton Fisher trace estimate
$\hat\tau_l\triangleq\Tr(\hHz_l)=\mathbb{E}\|\nabla_{z_l}\ell\|_2^2$, computed by one forward and backward pass per layer; no Hessian-vector probes are used in the selector.
Transformation and granularity candidates are scored by
$\hPrice_l(\alpha_l)=\frac12\hat\tau_l\,\mathrm{MSE}(z_l,\hat z_l(\alpha_l))$
on at most $512$ supervised calibration tokens per layer.
For bit allocation, each layer chooses from $\{2,3,4\}$ bits under the AMQ-compatible budget $3.0\pm0.005$, with group size $128$.
The price-guided assignment is solved as a binary MILP; HIGGS and AMQ use the same bit set, group size, target budget, and GPTQ backend.
GPTQ quantization uses WikiText2 calibration data, true-sequential quantization, activation ordering, and no MSE search.

\paragraph{Candidate sets.}
Transformation selection fixes W4 quantization with group size $128$ and chooses among the identity transform, AWQ-style scaling, SmoothQuant-style scaling, Hadamard rotation, and normalized diagonal whitening.
The SmoothQuant-style scale uses $\alpha=0.5$; Hadamard rotations use deterministic layerwise Rademacher signs with seed $0$ when the input dimension is a power of two.
The CALM-CKA baseline chooses the candidate with maximum linear CKA, and random baselines use seeds $0$, $1$, and $2$.
Granularity selection fixes W4, no transformation, uniform integer quantization, and min--max clipping, and chooses among per-tensor, per-channel, group sizes $32$, $64$, and $128$, and $64\times64$ block-wise quantization.
Granularity costs include FP32 scale metadata, reported as effective weight bits; ties within $1\%$ of the best price are broken by lower effective bit cost and then by the latency proxy.
Transformation and granularity checkpoints are saved as dense fake-quant Hugging Face checkpoints rather than packed deployment artifacts.

\paragraph{Evaluation protocol.}
WikiText2 PPL is evaluated on the test split using full-text concatenation with slow tokenization, sequence length $2048$, $141$ sequences, and $288{,}627$ supervised tokens.
Downstream accuracy is evaluated with LightEval, batch size $32$, greedy decoding with temperature $0$ and seed $42$.
The five tasks are BoolQ \citep{clark-etal-2019-boolq}, TruthfulQA MC2 \citep{lin-etal-2022-truthfulqa}, PIQA \citep{bisk2020piqa}, WinoGrande \citep{sakaguchi2021winogrande}, and WiC \citep{pilehvar-camacho-collados-2019-wic}; the reported average is the arithmetic mean of the five percentages.
FP16 references use float16.

\end{document}